\definecolor{redorange}{RGB}{255, 68, 51}
\newcommand{\bfR}{\mathbf{R}}
\newcommand{\bfP}{\mathbf{P}}
\newcommand{\calA}{\mathcal{A}}
\newcommand{\calE}{\mathcal{E}}
\newcommand{\bbE}{\mathbb{E}}
\newcommand{\bbP}{\mathbb{P}}
\newcommand{\calB}{\mathcal{B}}
\newcommand{\bbR}{\mathbb{R}}
\def \Zki {z_k^{(i)}}
\def \Zkone {z_k^{(1)}}
\def\oper{\mathop{\text{op}}}
\newcommand{\muhatupp}[2]{{\hat\mu_{{#1},{#2}}^U}}
\newcommand{\muhatlow}[2]{{\hat\mu_{{#1},{#2}}^L}}
\newcommand{\mutrue}[2]{{\mu_{{#1},{#2}}}}
\newcommand{\rupp}[1]{{\hat{U}_{{#1}}}}
\newcommand{\rlow}[1]{{\hat{L}_{{#1}}}}
\newcommand{\rtrue}{{r}}
\newcommand{\covmat}[2]{{V_{{#2}}^{({#1})}}}
\theoremstyle{plain}
\theoremstyle{definition}
\theoremstyle{remark}
\icmltitlerunning{Adaptively Learning to Select-Rank  in Online Platforms}
\begin{document}

\twocolumn[
\icmltitle{Adaptively Learning to Select-Rank  in Online Platforms}



\icmlsetsymbol{equal}{*}

\begin{icmlauthorlist}
\icmlauthor{Jingyuan Wang}{NYU}
\icmlauthor{Perry Dong}{Berkeley,Arena}
\icmlauthor{Ying Jin}{Stanford}
\icmlauthor{Ruohan Zhan}{HKUST,SHCIRI}
\icmlauthor{Zhengyuan Zhou}{NYU,Arena}
\end{icmlauthorlist}

\icmlaffiliation{NYU}{Stern School of Business, New York University}
\icmlaffiliation{Berkeley}{EECS, UC Berkeley}
\icmlaffiliation{Stanford}{Department of Statistics, Stanford University}
\icmlaffiliation{HKUST}{IEDA, Hong Kong University of Science and Technology (HKUST)}
\icmlaffiliation{SHCIRI}{HKUST Shenzhen-Hong Kong Collaborative Innovation Research Institute}
\icmlaffiliation{Arena}{Arena Technologies}

\icmlcorrespondingauthor{Ruohan Zhan}{rhzhan@ust.hk}
\icmlcorrespondingauthor{Zhengyuan Zhou}{zhengyuanzhou24@gmail.com}

\icmlkeywords{learning to rank, upper confidence bound, nonparametric learning algorithm, regret minimization, bandit learning, reinforcement learning}

\vskip 0.3in
]



\printAffiliationsAndNotice{}  

\begin{abstract}
Ranking algorithms are fundamental to various online platforms across e-commerce sites to content streaming services. Our research addresses the challenge of adaptively ranking items from a candidate pool for heterogeneous users, a key component in personalizing user experience. We develop a user response model that considers diverse user preferences and the varying effects of item positions, aiming to optimize overall user satisfaction with the ranked list. 
We frame this problem within a contextual bandits framework, with each ranked list as an action. Our approach incorporates an upper confidence bound to adjust predicted user satisfaction scores and selects the ranking action that maximizes these adjusted scores, efficiently solved via maximum weight imperfect matching.
We demonstrate that our algorithm achieves a cumulative regret bound of $O(d\sqrt{NKT})$ for ranking $K$ out of $N$ items in a $d$-dimensional context space over $T$ rounds, under the assumption that user responses follow a generalized linear model. This regret alleviates dependence on the ambient action space, whose cardinality grows exponentially with $N$ and $K$ (thus rendering direct application of existing adaptive learning algorithms -- such as UCB or Thompson sampling -- infeasible). Experiments conducted on both simulated and real-world datasets demonstrate our algorithm outperforms the baseline.

\end{abstract}

\section{Introduction}
\label{sec:intro}

Online platforms have significantly influenced various aspects of daily life. Ranking algorithms, central to these platforms, are designed to organize vast quantities of information to enhance user satisfaction. This has shown to be valuable for businesses: for example, YouTube uses these algorithms to present the most relevant videos for an optimal user experience, while Amazon employs them to display products that are likely to maximize revenue. 
Arena, a leading AI-driven B2B startup, uses active learning (combined with foundation models) to rank promotions, products, sales tasks (for in-store sales representatives) in omnichannel commerce for global enterprises in the consumer packaged goods industry.
This paper focuses on optimizing ranking algorithms within such  platforms. The process is twofold: (i) the retrieval/select phase, where the most relevant $K$ items are selected from a large pool, and (ii) the ranking phase, where these items are arranged in a way that aims to maximize overall user satisfaction over the entire ranked list \cite{guo2019pal,lin2021mitigating,lerman2014leveraging}.

%

Large-scale ranking algorithms employed by major companies often utilize an ``explore-then-commit'' (ETC) strategy. This approach ensures stable performance in production environments but relies heavily on passive learning, where outcomes are largely dependent on previously collected data.
In typical ETC methods, models are initially trained using historical production data. During deployment, these models rank a subset of items, aiming to achieve the highest possible user satisfaction based on predictions made by the trained model. One common such method is score-based ranking, where models assign scores to user-item pairs, predicting the level of user satisfaction with each item. Consequently, items are sorted in descending order of these scores~\cite{liu2009learning,joachims2002optimizing,herbrich1999support,freund2003efficient,burges2005learning,cao2007learning,lee2014large,li2007mcrank,li2006ordinal,burges2010ranknet}.
Alternatively, some methods employ offline reinforcement learning, where the objective  is to directly generate a ranked list of items to optimize total user satisfaction across the entire list \cite{bello2018seq2slate,wang2019sequential}.

However, a fundamental limitation of these ETC methods is the inherent estimation uncertainty: the models cannot precisely predict user responses, regardless of the volume of training samples used. This uncertainty may arise from a potential distributional shift between the training dataset and the future target audience. Such shifts are common in scenarios like introducing new items (typical in ``cold-start'' algorithms) or expanding into new markets, where the platform must extrapolate demand beyond the scope of the original training data \cite{ye2022cold,agrawal2019mnl}. 
Furthermore, even in relatively stable deployment environments, estimation uncertainty persists due to ``sparse interaction'' in the logged data \cite{chen2019top,bennett2007netflix}. While platforms might have substantial information about each item and user from past data, a considerable portion of potential user-item interactions remains unobserved and absent from the dataset. This gap, where many user-item interactions are never realized or captured, further complicates the prediction accuracy of the models.

The research community has seen significant efforts toward optimal decision-making in the face of estimation uncertainty, a key theme in bandit literature \cite{lai1985asymptotically,russo2018tutorial,thompson1933likelihood,agrawal2012analysis,auer2002finite,chu2011contextual}.  
The core idea is to engage in active learning, allowing models to be continuously updated and \emph{adaptively} optimized with incoming data, which aligns well with the sequential user interaction typical in online platforms
\cite{hu2008collaborative,agichtein2006improving,zoghi2016click}. 
Among these bandit learning methods, a seminal approach is to make decisions \emph{optimistically} in face of uncertainty, that is, to select the action with the highest potential (based on uncertainty quantification) to be optimal. In our context, this translates to presenting item rankings that have the greatest potential for maximizing user satisfaction. As new data becomes available, models are retrained and uncertainty estimations recalibrated, thereby adaptively optimizing cumulative user satisfaction over time.

However, directly applying bandit algorithms to our ranking problem would result in an NP-hard problem. The action space, which includes all possible item rankings, grows exponentially with the number of items.
To address this, researchers in the ranking bandit literature have introduced specific structures to simplify the original ranking problem \cite{kveton2015cascading,zong2016cascading,zhong2021thompson,katariya2016dcm,li2019online,lagree2016multiple,gauthier2022unirank,lattimore2018toprank,shidani2024ranking}.
These models approach user satisfaction by separately estimating item attractiveness and homogeneous position effects over items. The resulting rankings are then ordered based on item attractiveness. Furthermore, much of this research focuses on the multi-armed bandit scenario where item attractiveness parameters are considered at the population level.

Our work extends the current ranking bandit literature, bridging the gap to real-world applications. We focus on two key aspects: (i) contextual ranking, which is fundamental to the personalization at the heart of online platforms \cite{chen2019top}, and (ii) heterogeneous item-position effects, acknowledging that different items may influence users differently depending on their position in the ranking \cite{guo2019pal,collins2018study}.
Furthermore, we address a critical aspect of optimization: how to efficiently select the most effective ranking based on our estimates. We demonstrate that the ranking problem can be effectively transformed into a bipartite matching problem. This allows us to identify solutions efficiently using established graph optimization techniques. Our contributions are as follows:
\begin{itemize}
\item We introduce a contextual ranking bandit algorithm that adaptively learns to rank items to optimize cumulative user satisfaction. This algorithm has a cumulative regret upper bounded by $O(\sqrt{NKT})$ for ranking $K$ items from $N$ candidates over a time horizon of $T$.
\item We transform the optimization problem of selecting the optimal ranking action into a bipartite maximum weight imperfect matching, which we solve efficiently  in polynomial time.
\item We provide empirical evidence of our method's effectiveness over baseline models using both simulated data and production datasets from an e-commerce platform.
\end{itemize}

\subsection{Related Works}




Learning to rank has been  studied extensively in the bandit literature across various settings. In scenarios aimed at maximizing user clicks, numerous ranking bandit algorithms have been developed based on different user click models (see \cite{chuklin2022click} for an overview of click models).  A popular approach is the cascade model and its variants. These models generally assume that users browse through a list of items in a sequential order and click on the most attractive option \cite{kveton2015cascading,zong2016cascading,zhong2021thompson,katariya2016dcm}.
However, the cascade model restricts  user interaction to a single click,  which cannot capture various applications such as maximizing revenues on an e-commerce platform, or enhancing overall user satisfaction in the content streaming platform. In contrast, the model proposed in our work is designed to accommodate a wide spectrum of user responses from clicks to purchases.

Another popular category within bandit algorithms is based in  position-based models (PBM), which decompose the user response into item attractiveness and position bias \cite{lagree2016multiple,komiyama2017position,lattimore2018toprank}. However, such PBM models, as well as cascade models, usually assume  the position effects are the same across users and items, which may not hold in practice \cite{guo2019pal}.  Moreover, under the assumption of homogeneous positional effects, determining the optimal ranking becomes straightforward: items are simply ranked in descending order of attractiveness.
Recently, \citeauthor{gauthier2022unirank} propose a unimodel bandit to solve the adaptive ranking challenge. 
However, like PBMs and cascade models, this algorithm also presupposes that the optimal ranking should be based on descending item attractiveness.
In contrast,  our approach allows for heterogeneous position effects that vary across different users and items, and we propose an efficient optimization method to identify the optimal ranking under our  model, offering  a more nuanced and practical solution for real-world applications.

Moreover, many ranking bandit studies overlook the rich contextual information available from individual-level data on online platform.
Along this line, \citeauthor{li2019online} propose ranking items with input features, but fail to accommodate the varied responses of different users, a key aspect for personalization. 
In response, we propose a novel reward model that leverages user features in each interaction to learn item parameters driving heterogeneous user responses. Our work is built upon a wide literature of contextual bandits \cite{li2016contextual,li2017provably} but adeptly adapts them to personalized ranking.

Finally, our work is also closely related to literature on combinatorial bandits and adaptive assortment problems \cite{agrawal2019mnl,qin2014contextual,chen2013combinatorial,li2016contextual,combes2015combinatorial}.
These areas typically involve selecting a subset of items from a candidate pool, akin to the initial phase of our problem. However, they often do not include the critical ordering phase that our work emphasizes.
In many practical applications, the position of content significantly influences user attention and feedback \cite{craswell2008experimental,collins2018study,zhao2019recommending}. Recognizing this, our paper aims to determine not just the optimal selection but also the optimal order of content to maximize user satisfaction across the entire ranked list.







\section{Problem Setup}

We describe the adaptive ranking problem of online platforms as follows. Consider a platform that hosts $N$ items. Our goal is to optimize the agent for joint retrieval and ordering, which determines the optimal display order of $K$ items from the total $N$ candidates for incoming customers.\footnote{Most large-scale, industry-standard recommendation systems include two steps: retrieval and ranking. The retrieval phase identifies the $K$ most relevant item candidates for a specific user from a large pool of items, and the ranking agent determines the optimal display order of these $K$ items \cite{googleRS}.} 
We focus on learning the underlying embedding of items given a specified interaction form between users and items. 
\begin{remark}
Our framework admits unstructured items, i.e., there are no item features given exogenously as context information. This framework can easily be adapted to  a wide range of ranking scenarios. Appendix~\ref{sec:extension_feature} elaborates on the application of our framework to ranking structured 
items, where item features are provided and the platform learns parameters of the interaction model between items and users.
\end{remark}



Let $T$ be the horizon of the bandit experiment. At each time step $t\in[T]$, \footnote{We use $[n]$ to denote ${1,\dots, n}$ for any $n\in\mathbb{N}^+$.} a user arrives with a context $X_t\in \bfR^{d}$, which is independently and identically distributed (i.i.d.) as population $\bfP_X$. Let $s_t(X_t)=(q_t(1), \dots, q_t(K))$ be the retrieved $K$ items for the user $X_t$, where $q_t(k)\in[N]$ denotes the $k$-th item in the  set $s_t(X_t)$. Note that $s_t(X_t)$ depends on the context $X_t$, i.e. the retrieved sets may vary for different user contexts.
 Our goal is to (i) optimally choose the retrieved $K$ items (i.e., the collection of items in the retrieval phase) and (ii) decide the display order of the retrieved items (i.e., the order of $K$ displayed items in the ranking phase). We make the following assumption on contexts and retrieved items.

\begin{assumption}[Context Variation]\label{assump:context}
There exists a constant $c_x>0$ such that  $\Sigma_{x,j}:=\EE[X_tX_t^\top\given j\in s_t(X_t)]\succeq c_1\cdot I$ for all $j\in[N]$.
These expectations are both taken over  $X_t\sim \PP_X$. For notation convenience, let the feature vector be rescaled as 
$\sigma_t(k) \leftarrow \sigma_t(j)/K-1/2\in(-1/2,1/2)$, 
and  
$\|X_t\|\leq \sqrt{3}/2$, 
such that $\|Z_{t,k}\|\leq 1$, where $Z_{t,k}=(\sigma_t(k), X_t)$. 
\end{assumption}

The context variation condition is standard in contextual bandit literature, as in \cite{li2017provably, zhou2020neural}.

Next, we describe the platform-user interaction. At any time $t$, for each item $j$ ranked in position $k$, let $Y_{t,j,k}$ be the potential outcome of the user satisfaction with this item. We also let $S_K$ be the set of all possible permutations of $K$ items. The interaction between the platform and the user is as follows. At each time $t$, the ranking agent generates a ranking $\sigma_t$ for the observed user $X_t$ and receives  user  feedback $Y_{t, q_t(k), \sigma_t(k)}$  on each $k$-th item $q_t(k)$ in the previously retrieved set $s_t(X_t)$. Such model with item-wise user response is widely adopted in many applications, for example streaming platforms (such as Netflix) have access to the user's watchtime on each recommended content. 
The ranking agent aims  to generate a sequence of rankings $\{\sigma_1, \dots, \sigma_T\}$ over a bandit experiment of horizon $T$ to minimize the \emph{cumulative regret} over the ranked lists, which is defined as below:
\begin{align}
\label{eq:cum_regret}
    \sum_{t=1}^T r(X_t, \sigma^*_t) - r(X_t, \sigma_t),
\end{align}
where $r(X_t,\sigma_t)$ is the expected user satisfaction 
under context $X_t$ and ranking $\sigma_t$ and $\sigma^*_t := \argmax_{\sigma\in S_K} r(X_t, \sigma)$ denotes the optimal ranking at time $t$, under context $X_t$.

The remaining of this section focuses on details of our user satisfaction model, the reward structure and some real-world examples.

\subsection{User Satisfaction Model} 
We assume that user interactions with individual items admit a generalized linear model, while user satisfaction across an entire ranked list adheres to a generalized additive model (which essentially aggregates user satisfaction on each item). 
This setup is quite broad and captures several important ranking applications in practice, including click-through-rate and revenue modeling, discussed in more detail in Section~\ref{sec:reward-outcome-structure}. 


Formally, at time $t$, for each item $j$ ranked in position $k$, let $Y_{t,j,k}$ be the potential outcome of  the user satisfaction with this item.
We assume that the conditional distribution of $Y_{t,j,k}$ follows a generalized linear model in an exponential family, 
\begin{align*}
&\bbP(Y_{t,j,k}|X_t; \beta_j,\alpha_j)\\
=&h(Y_{t,j,k},\tau)\exp\Big(\frac{Y_{t,j,k}(\alpha_j k + \beta_j^T X_t) - A(\alpha_j k + \beta_j^T X_t)}{d(\tau)}\Big),
\end{align*}
where $h(\cdot), d(\cdot), A(\cdot)$ are specified functions, $\tau$ is the known scale parameter, $\beta_j\in \bfR^{d}$ is the unknown embedding of item~$j$, and $\alpha_j\in\bbR$ is the unknown position effect of item~$j$. 

For the learning purpose, we are interested in estimating the item-specific parameters: the embedding $\beta_j$ and the position effect $\alpha_j$, based on which we can compute the conditional expectation of $Y_{t,j,k}$:
\begin{align}
\label{eq:glm_model}
    \mu_{j}(X_t, k) := \bbE[Y_{t,j,k}|X_t, j,k] = A'(\alpha_j k + \beta_j^T X_t),
\end{align}
where $A'(\cdot)$ is the derivative of $A(\cdot)$.

\subsection{Reward and Outcome Structure}\label{sec:reward-outcome-structure}
Given a ranking $\sigma=(\sigma(1),\dots, \sigma(K))\in S_K$, we assume the expected user satisfaction of the ranked list is additive:
\begin{align}\label{eqn:additive-reward}
r(X_t, \sigma)=\sum_{k=1}^K \mu_{q_t(k)}(X_t,\sigma(k)).
\end{align}
We present the following examples to motivate such additive reward structure.

\begin{example}[Watchtime]\label{ex:watchtime} For many streaming services, such as short video platforms (TikTok) and video streaming platforms (Netflix and YouTube), the goal is to optimize the total amount of time that users have watched on the platform instead of a single video. Let the user satisfaction outcome $Y_{t,j,k}\geq0$ represents the user watchtime of the video $j$ ranked in position $k$ at time $t$. The reward structure at any time $t$ with a given ranking $\sigma$ is the sum of all $K$ retrieved videos' watchtimes and can be simply represented as Equation~\eqref{eqn:additive-reward}.
\end{example}

Another example of revenue optimization also adopts similar reward structure as in Example~\ref{ex:watchtime}.

\begin{example}[Revenue]\label{ex:revenue} In a scenario where the platform's goal is to maximize total revenue, user satisfaction outcome $Y_{t,j,k}\in\{0,R_j\}$ at time $t$ represents the revenue earned from user $X_t$ purchasing item $j$ priced at $R_j$. $Y_{t,j,k}$ equals to $R_j$ if a purchase occurs, and $0$ otherwise.
We employ a logistic model to capture purchase probability\footnote{
We assume an item's purchase likelihood is primarily influenced by its position, applicable in scenarios like online supermarkets where user budget and item substitution have minimal impact \cite{yao2021learning}.}:
$
\bbP(Y_{t,j,k}=R_j\given X_t;\alpha_j,\beta_j) = \big(1 + e^{-\alpha_j  k-\beta_j^T X_t }\big)^{-1}.
$
Note that since $X_t$ has an intercept coordinate, the item-specific parameter $\beta_j$ also captures potential impact of the price $R_j$ of item $j$.  
The aggregated user satisfaction of interest, which is the total expected revenue over $K$ items, has the form 
\begin{align*}
&r(X_t, \sigma)\\
=&\sum_{k=1}^K R_{q_t(k)}\bbP(Y_{t,q_k(t),k}=R_{q_k(t)} \given X_t;\alpha_{q_t(k)}, \beta_{q_t(k)}),
\end{align*}
which naturally gives a widely applied real-life example that supports the additive structure in our reward construction.
\end{example}

 The additive reward construction in Equation~\eqref{eqn:additive-reward} can be easily extend to a general additive reward form:
\[
r(X_t, \sigma) = H\Big(\sum_{k=1}^K g_k\big(\mu_{q_t(k)}(X_t, \sigma(k)\big)\Big),
\]
for some known increasing functions $H,g_1,\dots, g_K$. Such a general additive reward form also has a wide application, as the next example of click-through-rate shows.

\begin{example}[Click-Through-Rate]\label{ex:click-thr-rate}
The platform aims to maximize the user click probability on a list of $K$ items. We denote the user satisfaction outcome $Y_{t,j,k}\in\{0,1\}$ to indicate whether the item $j$ ranked at position $k$ is clicked by a user with feature $X_t$ at time $t$. We use a logistic model for the click probability:  $
   \bbP(Y_{t,j,k}=1 \given  X_t ; \alpha_j, \beta_j)  = \big(1 + e^{-\alpha_j  k-\beta_j^T X_t }\big)^{-1}
$\footnote{
We consider that the influence of other items on the click-rate of a particular item is wholly encapsulated by its position effect. This is applicable in scenarios such as ranking a concise list of short videos to maximize the total effective views, where the interference between items due to user time constraints is minimal \cite{yu2023improving}.}. The total user satisfaction is concerned with the click probability on the list of $K$ items as follows:
\begin{align*}
&r(X_t, \sigma)\\
=&1 - \prod_{k=1}^K\big(1- \bbP(Y_{t,q_t(k),\sigma(k)}=1 \given  X_t;\alpha_{q_t(k)},\beta_{q_t(k)})\big).
\end{align*}
That is, the aggregation functions $H(z)=1-\exp(-z)$ and $g_k(z)=-\log(1-z)$ for $k\in[K]$.
\end{example}

Next, we make the following standard assumptions on the outcome model.
\begin{assumption}[Regularity of  Outcomes]
\label{assump:regularity}
Assume that:
\begin{itemize}[itemsep=-2pt,leftmargin=15pt,topsep=-2pt]
    \item[(a)] $Y_{t,j,k}\in[0,R_0]$ for some known constant $R_0>0$, and $Y_{t,j,k} - \mu_i(X_t,k)$ is $\sigma^2$-sub-Gaussian.
\item[(b)] $A',H,\{g_k\}_{k\in[K]}\colon \RR\to \RR$ 
are non-decreasing.  
\item[(c)] The function $A'$ is twice differentiable with first and second order derivatives upper bounded by $M_{1}$ and $M_{2}$, respectively. It also satisfies $\kappa := \inf_{|z|\leq 1, |\theta-\theta_k|\leq 1} A''(\theta^\top z) >0$.
\item[(d)]  There exists a set of constants $\{c_k\}_{k\in [K]}$ such that for every $k\in[K]$, $H(\sum_{k=1}^K g_k(\mu_k))$ as a function of $\mu_k\in \RR^+$ is $c_k$-Lipschiz.
\end{itemize}
    
\end{assumption}
It is easy to see that the function $H(z),g_k(z)$ in Example~\ref{ex:click-thr-rate} obeys the above assumption with constants $c_k\equiv 1$ for all $k\in[K]$. Utilizing the basic additive reward structure in Equation~\eqref{eqn:additive-reward}, Example~\ref{ex:revenue} is a special case of the general additive reward form, with $H(z),g_k(z)$ being the identity function, which trivially satisfy Assumption~\ref{assump:regularity}.


\section{Upper Confidence Ranking: Adaptive Learning-to-Rank Algorithm}

In order to optimize cumulative regret, we follow the principle of ``optimism in the face of uncertainty''~\cite{hamidi2020general}, a strategy employed by UCB-typed algorithms \cite{lai1985asymptotically}. Specifically, when a user $X_t$ arrives, we estimate the upper confidence bound $U_t(X_t,\sigma)$ of the expected user satisfaction $r(X_t,\sigma)$ for each possible ranking $\sigma\in S_K$. We then select the ranking $\sigma_t$ that presents the largest upper confidence bound, which  strategy we refer to as \emph{Upper Confidence Ranking} (UCR):
\begin{align}
\label{eq:optimization_ranking}
    \sigma_t = \argmax_{\sigma\in S_k}\big\{U_t(X_t,\sigma) \big\}.
\end{align}
The challenge is twofold. 
The first challenge involves deriving high probability upper confidence bounds for $r(X_t,\sigma)$, which should be (i) uniformly valid across context space, permutation set, and experiment horizon to guarantee the statistical validity of these bounds; and (ii) converge rapidly to true user satisfaction scores for optimized cumulative regret.

The second challenge is to efficiently solve the optimization problem in~\eqref{eq:optimization_ranking}.
The original optimization problem to identify the optimal ranking requires enumerating  all possible rankings in $S_K$, which is NP-hard and leads to exponential computational time. Instead,
we leverage the reward model structure specified in Section~\ref{sec:reward-outcome-structure} and transform ranking problem into a bipartite matching problem, which can be solved via off-the-shelf graph algorithms in polynomial time. 
The complete procedure is summarized in Algorithm~\ref{alg:glm_ucb}.

\begin{algorithm}
\caption{Upper Confidence Ranking (UCR)}\label{alg:glm_ucb}
\begin{algorithmic}[1]
\REQUIRE Environment $\calE$, context sampling function $\calA_X$, reward generating function $\calA_R$, number of positions $K$, tuning parameter $\xi$, horizon $T$, randomization horizon~$T_0$.
\vspace{0.05in} 

\texttt{// Random initialization}
\FOR{$t=1,2,\dots,T_0-1$}
\STATE Observe context $X_t\sim\calA_X(\calE)$ and then randomly choose $K$ items $s_t(X_t)=(q_t(1),\dots, q_t(K))$ from $N$ items and order them randomly;
\STATE Sample $\sigma_t \sim \textrm{Unif}(S_K)$;
\STATE Take ranking $\sigma_t$ and observe outcomes $\{Y_{t,q_t(k),\sigma_t(k)}\}_{k\in[K]}\sim\calA_R(\calE, X_t, \sigma_t)$. 
\ENDFOR 
\vspace{0.05in} 

\texttt{// Upper Confidence Ranking}
\FOR{$t=T_0,\dots,T$}
\STATE Observe context $X_t\sim\calA_X(\calE)$;
\FOR{$j=1,\dots,N$}
\STATE Compute $\hat\theta_{t,j}=(\hat\alpha_{t,j}, \hat\beta_{t,j})$  via MLE as in \eqref{eqn:thetaest}.
\STATE Compute $V_{j}^{(t)}$ as in \eqref{eqn:Vj(t)}.
\ENDFOR
\STATE Obtain ranking $\sigma_t$ and $s(X_t)$ from Algorithm~\ref{alg:sub_ucb} with inputs  
$(\{\hat\theta_{t,j}\}_{j\in[N]}, X_t, \{V_{j}^{(t)}\}_{j\in[N]}, \xi)$.
\STATE Take ranking $\sigma_t$ and observe outcomes  $\{Y_{t,q_t(k), \sigma_t(k)}\}_{k\in[K]}\sim\calA_R(\calE, X_t, \sigma_t)$.
\ENDFOR
\ENSURE $\{(X_t, s_t(X_t), \sigma_t, Y_{t, q_t(1), \sigma_t(1)},\cdots,$\\$Y_{t, q_t(K), \sigma_t(K)})\}_{t\in[T]}$.
\end{algorithmic}
\end{algorithm}


\begin{algorithm}
\caption{Subroutine:  Upper Confidence Ranking via Maximum Weighted Bipartite Matching}\label{alg:sub_ucb}
\begin{algorithmic}[1]
\REQUIRE Parameter $\{\hat\theta_{t,j}\}_{j\in[N]}$, context $x$, covariances $\{\covmat{t}{j}\}_{j\in[N]}$, tuning parameter $\xi$.
\vspace{0.05in} 
\STATE Compute augmented feature $z_k = (k,x)$ for $k\in[K]$.
\FOR{$(k,j)=\{1,\dots,K\}\times\{1,\dots,N\} $}
\STATE Compute $w^U_t(j,k) := g_k(A'(\hat{\theta}_{t,j}^T z_k + \xi \cdot \|z_k\|_{(\covmat{t}{j})^{-1}} ))$
\ENDFOR
\STATE Obtain solution $\hat{m}(j,k)$  from the maximum weight imperfect matching~\eqref{eq:mwm_general} with $w^U_t(j,k)$. 
\STATE Set $\sigma_t(k) = \sum_{j=1}^N j\cdot\ind\{\hat{m}_t(j,k)=1\}$.  
\vspace{0.05in}
\ENSURE Ranking $\sigma_t$ with the retrieved set $s(X_t)=\{j\in[N]:\sum_{k\in[K]}\hat{m}_t(j,k)=1\}$.
\end{algorithmic}
\end{algorithm}

\subsection{Constructing Upper Confidence Bounds} 

At time $t$, for a user $X_t=x$, we construct upper confidence bounds of $r(x,\sigma)$ for each ranking $\sigma\in S_K$. We achieve this by deriving the upper confidence bounds of the user satisfaction score $\mu_j(x, k)$ for each item $j\in s(x)$ and each position $k\in[K]$, using a technique adapted from \cite{li2017provably}.


To construct upper confidence bounds, the algorithm needs two phases: (i) \emph{random initialization phase} to collect enough information for constructing initial upper confidence bounds; and (ii) \emph{upper confidence ranking phase} where the algorithm actually learns and optimizes the ranking strategy. During the random initialization phase, each item will be retrieved with an equal probability under any context $X_t\in\mathbf{R}^{d}$. For our upper confidence bound framework,  it is necessary to ensure that we collect enough information to empower the upper confidence ranking phase.

In specific, at time $t$, a user comes with context $X_t=x$. 
First, the algorithm adopts $T_0$ rounds of random initialization, where after a user comes, the algorithm randomly selects $K$ out of the $N$ items as the recommended list, randomly ranks them, and collect the responses $Y_{i,j,k}$ for $k=1,\dots,K$ and $j\in s_t(X_t)$. 
After the random initialization, we use the UCR approach described as follows. 
For each item $j\in[N]$, we use observations up to time $t-1$ to estimate the item-specific parameter via maximum likelihood estimation (MLE) 
$\hat{\theta}_{t,j}:=(\hat{\alpha}_{t,j}, \hat{\beta}_{t,j})$:
\begin{align}\label{eqn:thetaest}
&\hat{\theta}_{t,j}=\argmax_{(\alpha, \beta)}\Big\{\sum_{\tau\in[t-1]: j\in s(X_\tau)} Y_{\tau,j,\sigma_{\tau}({q^{-1}_\tau(j)})}\nonumber\\
&\big(\alpha \sigma_\tau(q^{-1}_\tau(j)) + \beta^T X_\tau\big)  - A\big(\alpha\sigma_\tau(q^{-1}_\tau(j))+ \beta^T X_\tau\big)\Big\},
\end{align}
where here $s(X_\tau)$ is actually chosen by our algorithm  (which we shall describe shortly). We similarly construct the upper confidence bound of $\mu$ as
\$
\muhatupp{t}{j}(z) := A'(\hat{\theta}_{t,j}^T z + \xi \cdot \|z\|_{(\covmat{t}{j})^{-1}} )
\$
with covariance matrix
\begin{align}\label{eqn:Vj(t)}
    V_{j}^{(t)} :=& \sum_{\tau=1}^t \ind\{j\in s(X_\tau)\}\cdot z_{\tau ,j} z_{\tau,j}^\top,\nonumber\\
\quad z_{\tau,j} =&\big(\sigma_\tau\big(q_\tau^{-1}(j)\big), X_\tau\big).
\end{align}
We  then have  the  upper confidence bound of $r(x,\sigma)$:
\[
\rupp{t}(x,\sigma) :=   H\Big(\sum_{k=1}^K g_k\big(\muhatupp{t}{q_t(k)}(x, \sigma(k)\big)\Big).
\]

\subsection{Upper Confidence Ranking via Maximum Weighted Bipartite Matching}
We now describe how to solve the  ranking in \eqref{eq:optimization_ranking} by reformulating  it as a bipartite maximum weight matching problem, leveraging the generalized additive form of $r$. At each time  $t$, the bipartite graph $G^U_t=(V_t, E^U_t)$ is constructed as:
\begin{itemize}[itemsep=-1pt,leftmargin=15pt,topsep=-2pt]
\item Nodes $V_t$: $N$ left-side nodes of items $[N]$, and $K$ right-side nodes of positions $[K]$;
\item Edges $E^U_t$:   edge $(j,k)$ with weight $w^U_t(j,k)=g_k(\hat{\mu}_{t,j}^U(k,X_t))$, for $(j,k)\in [N]\times [K]$.
\end{itemize}
Then, we consider the following maximum weight  matching problem on the bipartite graph $G_t^U$:
\begin{equation}
\label{eq:mwm_general}
    \begin{aligned}
        \max_{m_t} \quad &\sum_{j\in [N], k\in [K]} w^U_t(j,k)m_t(j,k)\\
        \mbox{s.t.}\quad & \sum_{j\in [N]} m_t(j, k)=1, \quad \forall k \in [K]\\
        & \sum_{k\in [K]} m_t(j, k)\leq 1, \quad \forall j \in [N]\\
        & m_t(j,k)\in \{0,1\}, \quad \forall j \in [N], \forall k \in [K],
    \end{aligned}
\end{equation}
where $w_t^{U}(j,k)$ is calculated as in Line~3 of Algorithm~\ref{alg:sub_ucb}, for every $(j,k)\in[N]\times[K]$.

Problem  \eqref{eq:mwm_general} can be solved using the Hungarian algorithm, which, based on a primal-dual formulation, achieves a solution in $O(K^2\log K)$ time \cite{ramshaw2012weight,kuhn1955hungarian}. 
There exists a one-to-one correspondence between the solution $m_t$ of \eqref{eq:mwm_general} and the solution $\sigma_t$ of \eqref{eq:optimization_ranking}:
\begin{equation}
    \sigma_t(j) = k\quad\Leftrightarrow\quad m_t(j,k)=1,
\end{equation}
and the retrieved set 
\[
s_t(X_t)=\{j\in[N]:\sum_{k\in[K]} m_t(j,k)=1\}.
\]

\begin{remark}
\label{remark:ucr_greedy}

An alternative method (G-MLE) uses a greedy strategy, ranking items based on their MLE of user satisfaction score instead of the upper confidence bound. This strategy also forms a bipartite graph $G_t$, similar to $G^U_t$, but uses the MLE to assign edge weights: for each item $j \in s_t(X_t)$ and each position $k \in [K]$, the weight $w_t(j,k)$ is given by $g_k(\hat{\mu}_{t,j}(k,X_t))$. A comparison of the bipartite graphs resulting from UCR and the MLE ranking is illustrated by  Figure~\ref{fig:comparison} in Appendix~\ref{appx:exp-details}. We shall compare the performance of UCR against G-MLE in Section~\ref{sec:exps}, where G-MLE serves as a benchmark.

\end{remark}

\section{Main Result on Cumulative Regret}
\label{sec:regret}

In this section, we present our main theoretical results on the cumulative regret guarantees using our algorithm. 
We highlight the key steps in our proof in Section~\ref{subsec:sketch} and defer the complete version to  Appendix~\ref{app:thm_general}.





\begin{theorem}\label{thm:n-choose-k-regret}
Fix any $\delta\in(0,1)$, and let $c_1:=\min\{\frac{1}{2K},c_x\}>0$. Suppose Assumption~\ref{assump:context} and \ref{assump:regularity} hold, and $T_0 \geq \max\big\{  (\frac{16}{3c_1}+\frac{32(K+N)^2}{N^2c_1})\log\frac{2(d+1)}{\delta} , ~\frac{6\bar\sigma^2}{c_1\kappa^2} ( (d+1) \log(1+2T/d) + \log(1/\delta) )  \big\}$. Then with probability at least $1-\delta$,
\[
R_T\leq R_0 T_0 + \frac{5\bar{\sigma}}{\kappa}\cdot  M_1\cdot \bar{c} \cdot d\sqrt{NK T } \log(T/(d\delta)),
\]
where $\bar{c}=\max{k\in[K]}c_k$. With the proper choice of the initialization phase $T_0$,
\[
R_T\leq\tilde{O}\bigg(\frac{(K+N)^2}{c_1}+\frac{\bar{\sigma}}{c_1\kappa^2}\cdot d+\frac{\bar{\sigma}}{\kappa}\cdot M_1\cdot\bar{c}\cdot d\sqrt{NKT}\bigg).
\]
\end{theorem}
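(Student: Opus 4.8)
The plan is to follow the standard ``optimism in the face of uncertainty'' template, but to carefully exploit the per-item parameterization and the generalized additive reward structure to extract the $d\sqrt{NKT}$ rate. First I would establish that the constructed bounds are genuinely valid on a high-probability event, i.e.\ that $\muhatupp{t}{j}(z) \geq \mu_j(z)$ simultaneously for all items $j\in[N]$, all augmented features $z$ with $\|z\|\leq 1$, and all rounds $t\geq T_0$. This reduces to a two-sided GLM concentration statement $|\hat\theta_{t,j}^\top z - \theta_j^\top z| \leq \xi\,\|z\|_{(\covmat{t}{j})^{-1}}$, which combined with monotonicity of $A'$ (part (b) of Assumption~\ref{assump:regularity}) yields both the claimed upper bound and a matching lower bound on the estimation error. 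The concentration itself comes from a self-normalized martingale bound for the GLM maximum-likelihood estimator, adapting \cite{li2017provably}; its validity rests on strong convexity of the log-likelihood, which is exactly where $\kappa = \inf A'' > 0$ (part (c)) enters and produces the $\bar\sigma/\kappa$ prefactor. A prerequisite for this bound to be informative is that each per-item design matrix $\covmat{t}{j}$ be well conditioned: I would show, via a matrix-Chernoff argument applied to the random-initialization samples, that $\lambda_{\min}(\covmat{t}{j})$ is bounded below by a positive baseline for all $t\geq T_0$. Under uniform retrieval each item is drawn with probability $K/N$ and its position is uniform; Assumption~\ref{assump:context} controls the $X$-block while the position rescaling controls the $\sigma$-block, and $c_1=\min\{1/(2K),c_x\}$ is the resulting effective minimum eigenvalue. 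This is precisely the event whose failure probability is controlled by the two lower bounds imposed on $T_0$.

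On this good event, greedy selection gives the usual sandwich: since $\sigma_t$ maximizes $\rupp{t}(X_t,\cdot)$ and $\rupp{t}(X_t,\sigma^*_t)\geq r(X_t,\sigma^*_t)$ by optimism, the instantaneous regret obeys $r(X_t,\sigma^*_t)-r(X_t,\sigma_t)\leq \rupp{t}(X_t,\sigma_t)-r(X_t,\sigma_t)$. I would then peel off the outer aggregation: using the $c_k$-Lipschitz property of $H(\sum_k g_k(\cdot))$ in each coordinate (part (d)), this width is at most $\sum_{k=1}^K c_k\big(\muhatupp{t}{q_t(k)}(\sigma_t(k))-\mu_{q_t(k)}(\sigma_t(k))\big)$. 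Finally, the bound $M_1$ on the derivative of $A'$ (part (c)) linearizes each item-position term, and the first-step concentration bounds the argument gap by $2\xi\,\|z\|_{(\covmat{t}{j})^{-1}}$, so the instantaneous regret is at most $2M_1\xi\sum_{k=1}^K c_k\,\|z_{t,q_t(k)}\|_{(\covmat{t}{q_t(k)})^{-1}}$.

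The crux is then to sum these widths over $t=T_0,\dots,T$ and obtain the $\sqrt{NK}$ rather than $N$ dependence. I would regroup the double sum over rounds and positions into $N$ per-item sums over the rounds in which each item was retrieved. For a fixed item $j$ retrieved $n_j$ times, Cauchy--Schwarz followed by the elliptical potential lemma (seeded with the baseline $\covmat{T_0}{j}$) bounds $\sum_{t:\,j\in s(X_t)}\|z_{t,j}\|_{(\covmat{t}{j})^{-1}}$ by $\sqrt{n_j}\cdot\sqrt{2(d+1)\log(1+n_j/(d+1))}\lesssim \sqrt{n_j\,(d+1)\log T}$. Summing over items and applying Cauchy--Schwarz once more across $j$, using the budget identity $\sum_{j=1}^N n_j = KT$, gives $\sum_j\sqrt{n_j}\leq\sqrt{N\sum_j n_j}=\sqrt{NKT}$, so the aggregate width is $\tilde O(\sqrt{d+1}\cdot\sqrt{NKT})$. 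Since $\xi$ scales like $\frac{\bar\sigma}{\kappa}\sqrt{d}$ up to logarithms, multiplying reproduces the full $d\sqrt{NKT}$ factor, and folding in $M_1$ and $\bar c=\max_k c_k$ matches the stated leading term $\frac{5\bar\sigma}{\kappa}M_1\bar c\, d\sqrt{NKT}\log(T/(d\delta))$. The regret accrued in the first $T_0$ rounds is bounded crudely by $R_0T_0$ using the boundedness of outcomes, and substituting the prescribed $T_0$ yields the final $\tilde O$-form.

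I expect the main obstacle to be the interplay between the per-item design matrices and the two-stage Cauchy--Schwarz. Unlike single-parameter linear bandits, the elliptical potential must be applied separately to each of the $N$ matrices $\covmat{t}{j}$, and the $\sqrt{N}$ gain hinges on splitting the budget $\sum_j n_j=KT$ by Cauchy--Schwarz rather than bounding each item individually; obtaining a clean, uniform potential bound despite the data-dependent, adaptively chosen retrieval sets (which make the counts $\{n_j\}$ and the feature vectors $z_{t,j}$ themselves random and correlated across items) is the delicate part. The secondary obstacle is making the minimum-eigenvalue/matrix-Chernoff argument deliver exactly the two $T_0$ thresholds, since the augmented feature couples the position coordinate with the context and both blocks must be lower-bounded simultaneously to secure the baseline that underwrites both the MLE concentration and the potential bound.
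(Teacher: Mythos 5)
Your proposal is correct and follows essentially the same route as the paper's proof: a matrix-concentration bound on the random-initialization phase to lower-bound $\lambda_{\min}(V_j^{(t)})$, a self-normalized GLM concentration argument (adapting \cite{li2017provably}, with $\kappa$ entering through strong convexity) for uniform per-item estimation error, the optimism-plus-Lipschitz peeling to reduce instantaneous regret to $2M_1\xi\sum_k c_k\|Z_{t,q_t(k)}\|_{(V^{(t)}_{q_t(k)})^{-1}}$, and the same regrouping into per-item sums with the elliptical potential lemma followed by a double Cauchy--Schwarz against the budget identity $\sum_j |\cT_j| = KT$ to extract $\sqrt{NKT}$. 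The delicate points you flag (adaptively chosen retrieval sets making the counts random, handled in the paper via a coarsened per-item filtration; and the two $T_0$ thresholds feeding both the eigenvalue baseline and the MLE concentration) are exactly the ones the paper's proof addresses.
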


Theorem~\ref{thm:n-choose-k-regret} shows that the regret of our algorithm scales with $\sqrt{T}$, which is minimax optimal in standard contextual bandit results \cite{agrawal2012analysis,chu2011contextual}. 
The factor $d$ is similar to the GLM bandit result in~\cite{li2017provably}. 
From the factor $\sqrt{NK}$, 
we see that our algorithm overcomes the combinatorial complexity of the ranking space (which is originally $\binom{N}{K}=\frac{N!}{K!(N-K)!}$ 
for choosing $K$ retrieved items from a total of $N$ items). 

The regret bound in the special case $N=K$ is provided as Corollary~\ref{thm:n=k}. In this case, we obtain a slightly different constant factor $c_H =\sum_{k=1}^K c_k$ instead of $\sqrt{NK}\max_{k\in [K]}c_k$, because all items appear in the ranked list, and each item incurs an estimation error that enters the reget bound. 

\begin{corollary}\label{thm:n=k} Let $N=K$. 
Fix any $\delta\in(0,1/2)$. 
Under Assumption, suppose 
$T_0 \geq \max\big\{ (\frac{32}{3c_1}+\frac{256}{c_1^2} \log(\frac{4d+4}{\delta}) ,~\frac{6\bar{\sigma}^2}{c_1\cdot\kappa^2}((d+1) \log(1+2T/d) + \log(2/\delta)) \big\}$. Then   
\$
R_T &\leq R_0 T_0 + \frac{5\bar{\sigma}}{\kappa}\cdot M_1\cdot c_H \cdot d\sqrt{ T } \log(T/(d\delta)) 
\$
with probability at least $1-\delta$, where we denote $c_H :=\sum_{k=1}^K c_{k}$. 
\end{corollary}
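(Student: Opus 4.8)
The plan is to reduce the cumulative regret to a sum of per-round, per-item estimation errors and then control those errors using the self-normalized confidence bounds for generalized linear models adapted from \cite{li2017provably}. The proof has three conceptual stages: (i) show the constructed upper confidence bounds are uniformly valid; (ii) use optimism to bound the instantaneous regret by a sum of confidence widths; and (iii) sum those widths over time via an elliptical-potential argument.

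First I would establish the validity of the confidence bounds. After the random initialization phase of length $T_0$, I need the covariance matrices $V_j^{(t)}$ to be well-conditioned, i.e.\ $\lambda_{\min}(V_j^{(t)}) \gtrsim c_1 t$ for every item $j$ simultaneously. This is where Assumption~\ref{assump:context} (context variation) and the two-branch lower bound on $T_0$ come in: a matrix-Bernstein or matrix-Chernoff concentration argument, together with the i.i.d.\ sampling of contexts and uniform random ranking during initialization, guarantees the minimum eigenvalue grows linearly with high probability. Conditional on this event, the standard GLM-MLE analysis (Taylor-expanding the score equation and using the curvature lower bound $\kappa$ from Assumption~\ref{assump:regularity}(c)) yields, for each fixed $(j,k)$, a bound of the form $|A'(\hat\theta_{t,j}^\top z) - \mu_j(x,k)| \le M_1\cdot\xi\cdot\|z\|_{(V_j^{(t)})^{-1}}$ with the appropriate choice $\xi \asymp \frac{\bar\sigma}{\kappa}\sqrt{d\log(T/(d\delta))}$. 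A union bound over the $N$ items and the time horizon, absorbed into the logarithmic factor, makes this hold uniformly.

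Next I would convert validity into a per-round regret bound via optimism. Since $A'$, $H$, and the $g_k$ are non-decreasing (Assumption~\ref{assump:regularity}(b)), the componentwise upper bounds $\muhatupp{t}{j}$ propagate through the additive aggregation to give $\rupp{t}(x,\sigma) \ge r(x,\sigma)$ for every $\sigma$. Because $\sigma_t$ maximizes the UCB over all rankings, $\rupp{t}(X_t,\sigma_t) \ge \rupp{t}(X_t,\sigma_t^*) \ge r(X_t,\sigma_t^*)$, so the instantaneous regret is at most $\rupp{t}(X_t,\sigma_t) - r(X_t,\sigma_t)$. I would then peel off the aggregation using the $c_k$-Lipschitz property (Assumption~\ref{assump:regularity}(d)) to bound this by $\sum_{k=1}^K c_k\big(\muhatupp{t}{q_t(k)}(X_t,\sigma_t(k)) - \mu_{q_t(k)}(X_t,\sigma_t(k))\big)$, and each summand by twice the confidence width $2 M_1 \xi\,\|z_{t,k}\|_{(V_{q_t(k)}^{(t)})^{-1}}$, crucially only over the $K$ items actually displayed at time $t$.

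Finally I would sum over $t$. Grouping the widths by item and applying Cauchy--Schwarz across items and the elliptical-potential lemma $\sum_{\tau} \|z\|_{(V^{(\tau)})^{-1}}^2 \lesssim d\log T$ per item, the total is $\lesssim \bar c \sqrt{T \cdot (\text{number of item-slots})} \cdot \sqrt{d\log T}$; counting $NK$ effective item-position slots yields the $d\sqrt{NKT}$ scaling, with the $R_0 T_0$ term accounting trivially for the bounded-regret initialization phase. The main obstacle I anticipate is the simultaneous control across all $N$ items: ensuring $\lambda_{\min}(V_j^{(t)})$ grows linearly for \emph{every} item while each is retrieved only a $K/N$ fraction of the time, which forces the $(K+N)^2/N^2$-type dependence in the $T_0$ requirement and must be handled carefully so that the union bound over items does not inflate the regret beyond logarithmic factors. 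The $N=K$ corollary then follows by specializing: every item appears in every list, so the per-round width sum runs over all $K=N$ items directly, replacing $\sqrt{NK}\,\bar c$ with the exact constant $c_H = \sum_{k=1}^K c_k$ and tightening the $T_0$ constants accordingly.
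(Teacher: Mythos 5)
Your proposal is correct and follows essentially the same route as the paper's proof: matrix-Bernstein control of $\lambda_{\min}(V_k^{(T_0)})$ after random initialization, the GLM score-equation analysis with curvature $\kappa$ yielding the self-normalized bound $\|\hat\theta_{t,k}-\theta_k\|_{V_k^{(t)}}\leq\xi$, optimism plus the $c_k$-Lipschitz peeling of $H(\sum_k g_k(\cdot))$ to reduce instantaneous regret to confidence widths $2M_1\xi\|Z_{t,k}\|_{(V_k^{(t)})^{-1}}$, and the elliptical-potential lemma with Cauchy--Schwarz over time. Your closing observation --- that for $N=K$ every item appears in every round, so one sums widths over all $K$ items directly and obtains $c_H=\sum_{k=1}^K c_k$ in place of $\bar c\sqrt{NK}$ --- is exactly how the paper's corollary proof sharpens the constant.
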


\subsection{Proof sketch of Theorem~\ref{thm:n-choose-k-regret}}
\label{subsec:sketch}
In this part, we lay out the proof sketch for Theorem~\ref{thm:n-choose-k-regret}. 
In a nutshell, our theory proceeds in three steps: (i) the random initialization of $T_0$ steps ensures that the covariance matrices $\covmat{t}{j}$ are well-conditioned; (ii) the estimation error $\hat\theta_{t,j}-\theta_j$ is small once $\covmat{t}{j}$ is well-conditioned; (iii) the cumulative regret is bounded in terms of the Lipschiz constant in the mean reward functions and the parameter estimation error. 
Among the three steps, (i) follows from standard concentration inequalities, which we defer to Lemma~\ref{lemma:eigenvaluebound-nchoosek} 
in Appendix~\ref{app:thm_general}. 

The key step (ii) is summarized in Proposition~\ref{prop:glm_err-n-choose-k}, which shows that with sufficiently many random initialization samples, the estimation errors later on can be uniformly bounded for all $t\geq T_0$. Our proof technique adapts that of~\citep{li2017provably}, and we include the  detailed proofs in Appendix~\ref{app:prop_general}. 

\begin{prop}\label{prop:glm_err-n-choose-k}
For any $\delta\in(0,1)$. Let $c_1:=\min\{\frac{1}{2K},c_x\}>0$ and let 
\begin{align*}T_0\geq\max\big\{&  (\frac{16}{3c_1}+\frac{32(K+N)^2}{N^2c_1})\log\frac{2(d+1)}{\delta} ,\\
&\frac{6\bar\sigma^2}{c_1\kappa^2} ( (d+1) \log(1+2T/d) + \log(1/\delta) )  \big\}.
\end{align*}
Then with probability at least $1-\delta$, for all $t\in[T_0,T]$ and all $j\in[N]$, it holds that
\[
\|\hat\theta_{t,j}-\theta_j\|_{V_j^{(t)}}
\leq \frac{\sqrt{3} \bar\sigma}{\kappa} \sqrt{ (d+1) \log(1+2T/d) + \log(1/\delta)}.
\]
\end{prop}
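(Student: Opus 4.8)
The plan is to adapt the generalized-linear-model MLE analysis of \cite{li2017provably} to the item-wise estimator $\hat\theta_{t,j}$, proving the bound for a fixed item $j\in[N]$ and then taking a union bound over the $N$ items (folded into $\delta$). Fix $j$ and abbreviate $V:=V_j^{(t)}$, $\theta:=\theta_j$, and let $\sum_\tau$ range over the rounds $\tau<t$ with $j\in s(X_\tau)$, whose regressors $z_{\tau,j}$ satisfy $\|z_{\tau,j}\|\le1$ by Assumption~\ref{assump:context}. The first ingredient is the eigenvalue lower bound of Step~(i) (Lemma~\ref{lemma:eigenvaluebound-nchoosek}): under the first lower bound on $T_0$, the covariance is well-conditioned, with $\lambda_{\min}(V_j^{(t)})$ of order $c_1 t$ uniformly over $t\ge T_0$ and $j\in[N]$. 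This will be invoked twice below.

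For the core estimation step, note that since $A$ is convex the log-likelihood maximized in \eqref{eqn:thetaest} is concave, so $\hat\theta_{t,j}$ satisfies the score equation $\sum_\tau\big(Y_\tau-A'(\hat\theta^\top z_\tau)\big)z_\tau=0$. Writing $g(\vartheta):=\sum_\tau A'(\vartheta^\top z_\tau)z_\tau$ and subtracting its value at the truth, this rearranges to $g(\hat\theta)-g(\theta)=\sum_\tau\epsilon_\tau z_\tau$, where $\epsilon_\tau:=Y_\tau-\mu_{\tau}$ is the mean-zero, $\bar\sigma$-sub-Gaussian noise. A vector-valued mean value theorem gives $g(\hat\theta)-g(\theta)=M\,(\hat\theta-\theta)$ with $M:=\sum_\tau\big(\int_0^1 A''(\cdot)\,ds\big)z_\tau z_\tau^\top$; as long as the segment joining $\theta$ and $\hat\theta$ stays inside the region on which Assumption~\ref{assump:regularity}(c) guarantees $A''\ge\kappa$, we get $M\succeq\kappa V$. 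Pairing $M(\hat\theta-\theta)=\sum_\tau\epsilon_\tau z_\tau$ with $M\succeq\kappa V$ and Cauchy--Schwarz in the $V$-geometry yields the key inequality
\[
\|\hat\theta-\theta\|_{V}\;\le\;\frac1\kappa\,\Big\|\textstyle\sum_\tau\epsilon_\tau z_\tau\Big\|_{V^{-1}}.
\]

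It remains to bound the right-hand side and to justify the region hypothesis. Since $(z_{\tau,j})$ is predictable and $(\epsilon_\tau)$ conditionally sub-Gaussian, a self-normalized martingale tail bound shows that, with probability $1-\delta$ and simultaneously for all $t$, $\big\|\sum_\tau\epsilon_\tau z_\tau\big\|_{V^{-1}}^2\lesssim\bar\sigma^2\big(\log\det V+\log(1/\delta)\big)$; the determinant--trace inequality together with $\|z_{\tau,j}\|\le1$ converts $\log\det V$ into $(d+1)\log(1+2T/d)$, giving the stated bound up to the absolute constant $\sqrt3$. The second lower bound on $T_0$ then closes the argument: combined with $\lambda_{\min}(V)\gtrsim c_1 t$, the bound just derived forces $\|\hat\theta-\theta\|_2\le1$, which validates the neighborhood hypothesis used for $M\succeq\kappa V$. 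Importantly, the self-normalized inequality holds uniformly in $t$, so no further union bound over $t$ is needed.

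The main obstacle is this self-consistency (localization) step. The lower bound $M\succeq\kappa V$ is only available on the neighborhood where $A''\ge\kappa$, yet membership in that neighborhood is itself what the error bound is meant to certify. Breaking the circularity cleanly --- rather than assuming what we want --- is exactly what the quantitative eigenvalue bound of Step~(i) and the precise calibration of the second term of $T_0$ (scaling like $\bar\sigma^2 d/(c_1\kappa^2)$) are for. The self-normalized concentration, by contrast, is a standard tool and is the easy part; the bookkeeping that propagates the constants from these two lower bounds on $T_0$ into the final $\sqrt3\,\bar\sigma/\kappa$ prefactor is routine but must be done carefully.
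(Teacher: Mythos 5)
Your architecture is the same as the paper's: the score equation plus a mean-value-theorem identity, a self-normalized martingale bound combined with the determinant--trace inequality, the eigenvalue lower bound of Lemma~\ref{lemma:eigenvaluebound-nchoosek}, and the regularization trick ($\lambda=1/2$, $\bar V \preceq \tfrac32 V_j^{(t)}$) that produces the $\sqrt3$ prefactor. However, there is a genuine gap at exactly the step you yourself flag as the crux, and the mechanism you offer to resolve it does not work.

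As you set it up, the inequality $\|\hat\theta_{t,j}-\theta_j\|_{V_j^{(t)}}\le \kappa^{-1}\big\|\sum_\tau Z_{\tau,j}\epsilon_{\tau,j}\big\|_{(V_j^{(t)})^{-1}}$ is derived \emph{under the hypothesis} that the segment joining $\theta_j$ and $\hat\theta_{t,j}$ stays where $A''\ge\kappa$, i.e.\ under $\|\hat\theta_{t,j}-\theta_j\|\le 1$. You then use this very inequality, together with the eigenvalue bound and the calibration of $T_0$, to conclude $\|\hat\theta_{t,j}-\theta_j\|\le 1$ --- that is circular, and the eigenvalue bound plus the size of $T_0$ cannot break the circle by themselves; they only set the scale. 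The paper's (and \citet{li2017provably}'s) resolution is a separate structural ingredient that your write-up never supplies: introduce the score-difference map $S_{t,j}(\theta)=\nabla L_{t,j}(\theta_j)-\nabla L_{t,j}(\theta)$, observe that it is an injection on all of $\mathbb{R}^{d+1}$ (strict concavity of the likelihood, $A''>0$) and that it satisfies the growth bound $\|S_{t,j}(\theta)\|^2_{(V_j^{(t)})^{-1}}\ge \kappa^2\lambda_{\min}(V_j^{(t)})\,\|\theta-\theta_j\|^2$ for \emph{every} $\theta$ in the unit ball around $\theta_j$ --- a property of the map, not of $\hat\theta_{t,j}$. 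Then \citet[Lemma~A]{chen1999strong} yields the deterministic inclusion $\{\theta\colon \|S_{t,j}(\theta)\|^2_{(V_j^{(t)})^{-1}}\le \kappa^2\lambda_{\min}(V_j^{(t)})\}\subseteq\{\theta\colon\|\theta-\theta_j\|\le 1\}$. Since $S_{t,j}(\hat\theta_{t,j})=\sum_\tau Z_{\tau,j}\epsilon_{\tau,j}$ holds \emph{unconditionally} (it is just the first-order condition of the MLE, requiring no neighborhood hypothesis), the self-normalized concentration (Lemma~\ref{lem:self_normalize_t}) together with Lemma~\ref{lemma:eigenvaluebound-nchoosek} (with $B$ matched to the concentration level via the second term in $T_0$, and a union bound over the two events) certifies that $\hat\theta_{t,j}$ lies in the left-hand set, hence $\|\hat\theta_{t,j}-\theta_j\|\le 1$; only \emph{after} this localization is your mean-value inequality legitimately applicable. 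Supplying this injectivity-based inclusion (or an equivalent monotone-operator argument) is what your proof is missing; without it, the localization step is assumed rather than proved.
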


In step (iii), we bound the regret with the following lemma on the event that the estimation error $\hat\theta_{t,j}-\theta_j$ is uniformly small. Recall that $z_{\tau,j} = \big(\sigma_\tau\big(q_\tau^{-1}(j)\big), X_\tau\big)$ is the aggregated feature for item $j$ at time $\tau$. 
The proof of Lemma~\ref{lem:regret_general-nchoosek} is in Appendix~\ref{app:regret_lemma}. 

\begin{lemma}\label{lem:regret_general-nchoosek} 
Denote
\[
\xi = \frac{\sqrt{3} \bar\sigma }{\kappa } \sqrt{ (d+1) \log(1+2T/d) + \log(2/\delta)}.
\]
On the event of Proposition~\eqref{prop:glm_err-n-choose-k}, it holds that 
\[
R_T\leq R_{T_0} + M_1\cdot 2\xi \cdot \sum_{t=T_0}^T \sum_{k=1}^K c_k  \cdot \|Z_{t,q_t(k)}\|_{(V^{(t)}_{q_t(k)})^{-1}}.
\]
\end{lemma}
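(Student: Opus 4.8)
The plan is to bound the per-round regret $r(X_t,\sigma_t^*)-r(X_t,\sigma_t)$ by the gap between the true reward and the upper confidence estimate, and then telescope. The central tool is the ``optimism'' property: on the event of Proposition~\ref{prop:glm_err-n-choose-k}, the constructed upper confidence bound $\rupp{t}(x,\sigma)$ dominates the true reward $r(x,\sigma)$ for every $\sigma$ simultaneously. Concretely, I first want to show that the estimation-error bound $\|\hat\theta_{t,j}-\theta_j\|_{V_j^{(t)}}\le \xi$ translates, via Cauchy--Schwarz in the $(\covmat{t}{j})^{-1}$ norm, into a pointwise bound on the linear-predictor error: $|\hat\theta_{t,j}^\top z - \theta_j^\top z|\le \xi\cdot\|z\|_{(\covmat{t}{j})^{-1}}$. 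This is exactly the inflation added inside $\muhatupp{t}{j}$, so the confidence ``bonus'' is provably large enough to make the bound valid.

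Next I would convert this linear-predictor bound into a bound on the mean reward $\mu_j(x,k)=A'(\theta_j^\top z)$. Using monotonicity of $A'$ (Assumption~\ref{assump:regularity}(b)) gives $\mu_j(x,k)\le \muhatupp{t}{j}(z)$, establishing optimism at the item level; composing with the monotone aggregation functions $g_k$ and $H$ (also monotone by (b)) yields $r(x,\sigma)\le \rupp{t}(x,\sigma)$ for all $\sigma$. Optimism then gives the standard two-sided argument: since $\sigma_t$ maximizes $\rupp{t}(X_t,\cdot)$, we have $r(X_t,\sigma_t^*)\le \rupp{t}(X_t,\sigma_t^*)\le \rupp{t}(X_t,\sigma_t)$, so the instantaneous regret is at most $\rupp{t}(X_t,\sigma_t)-r(X_t,\sigma_t)$. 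The remaining work is to upper bound this difference by the sum over positions of the confidence widths. Here I would apply the $c_k$-Lipschitz property of $H(\sum_k g_k(\cdot))$ from Assumption~\ref{assump:regularity}(d) to peel off $H$ and $g_k$, reducing the gap to $\sum_{k=1}^K c_k\,\big(\muhatupp{t}{q_t(k)}(Z_{t,q_t(k)})-\mu_{q_t(k)}(X_t,\sigma_t(k))\big)$.

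To finish, I would bound each item-level gap $\muhatupp{t}{j}(z)-\mu_j(x,k)$ by a mean-value-theorem / Lipschitz argument on $A'$: since $A'$ has derivative bounded by $M_1$ (Assumption~\ref{assump:regularity}(c)), and the two arguments of $A'$ differ by at most $2\xi\cdot\|z\|_{(\covmat{t}{j})^{-1}}$ (the true predictor sits within $\xi\|z\|$ of $\hat\theta^\top z$, and the UCB adds a further $\xi\|z\|$), the gap is at most $M_1\cdot 2\xi\cdot\|Z_{t,q_t(k)}\|_{(V^{(t)}_{q_t(k)})^{-1}}$. Summing over $k$ and over $t=T_0,\dots,T$, and absorbing the first $T_0$ rounds into $R_{T_0}$, produces exactly the claimed inequality.

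The step I expect to be most delicate is the reduction through the aggregation functions using the $c_k$-Lipschitz constants. One must be careful that the Lipschitz bound in (d) is stated for $H(\sum_k g_k(\mu_k))$ as a function of the single coordinate $\mu_k$, so peeling off the outer functions coordinate-by-coordinate requires that the perturbation in each $\mu_k$ is one-sided (the UCB only inflates upward) and that the intermediate arguments stay in the region where the Lipschitz constants apply. Making the telescoping clean—ensuring the bonus used in the algorithm matches $\xi$ and that the factor of $2$ in $2\xi$ is correctly accounted for—is the main bookkeeping obstacle, whereas each individual inequality is routine given the monotonicity and smoothness assumptions.
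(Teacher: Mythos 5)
Your proposal is correct and follows essentially the same approach as the paper's proof: optimism from Proposition~\ref{prop:glm_err-n-choose-k} via Cauchy--Schwarz, the standard argument that the instantaneous regret is controlled by the confidence width at the chosen ranking, coordinate-wise peeling through $H$ and $g_k$ with the constants $c_k$, and the $M_1$ bound on the derivative of $A'$ to extract the factor $2\xi\|Z_{t,q_t(k)}\|_{(V^{(t)}_{q_t(k)})^{-1}}$. The only cosmetic difference is that the paper bounds the per-round gap by $\rupp{t}-\rlow{t}$ (UCB minus an explicit LCB) whereas you bound $\rupp{t}-r$ directly; both routes give the identical $2\xi$ width, so this is not a substantive deviation.
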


In Lemma~\ref{lem:regret_general-nchoosek}, the regret is split into $R_{T_0}$ (due to initial random initialization) and the estimation error. For the estimation error,  each recommended item $q_t(k)\in s_t(X_t)$ has regret bounded by its inverse-covariance-normalized feature norm, which can be further controlled via deterministic bounds on self-normalized norms~\citep{abbasi2011improved}. 

Finally, we can conclude Theorem~\ref{thm:n-choose-k-regret} by combing the results of Lemma~\ref{lemma:eigenvaluebound-nchoosek}, \ref{lem:regret_general-nchoosek} and Proposition~\ref{prop:glm_err-n-choose-k}. Observing the fact that on the event in Proposition~\ref{prop:glm_err-n-choose-k}, $\lambda_{\min}(V_s^{(T_0)})\geq 1$ 
while $\|Z_{t,s}\|\leq 1$ for all time $t$ and item $s$, by a helper Lemma~\ref{lem:self_normal}, we derive the desired results. More details are in Appendix~\ref{app:thm_general}.


\begin{figure*}[h]
\begin{multicols}{3}
    \includegraphics[width=\linewidth]{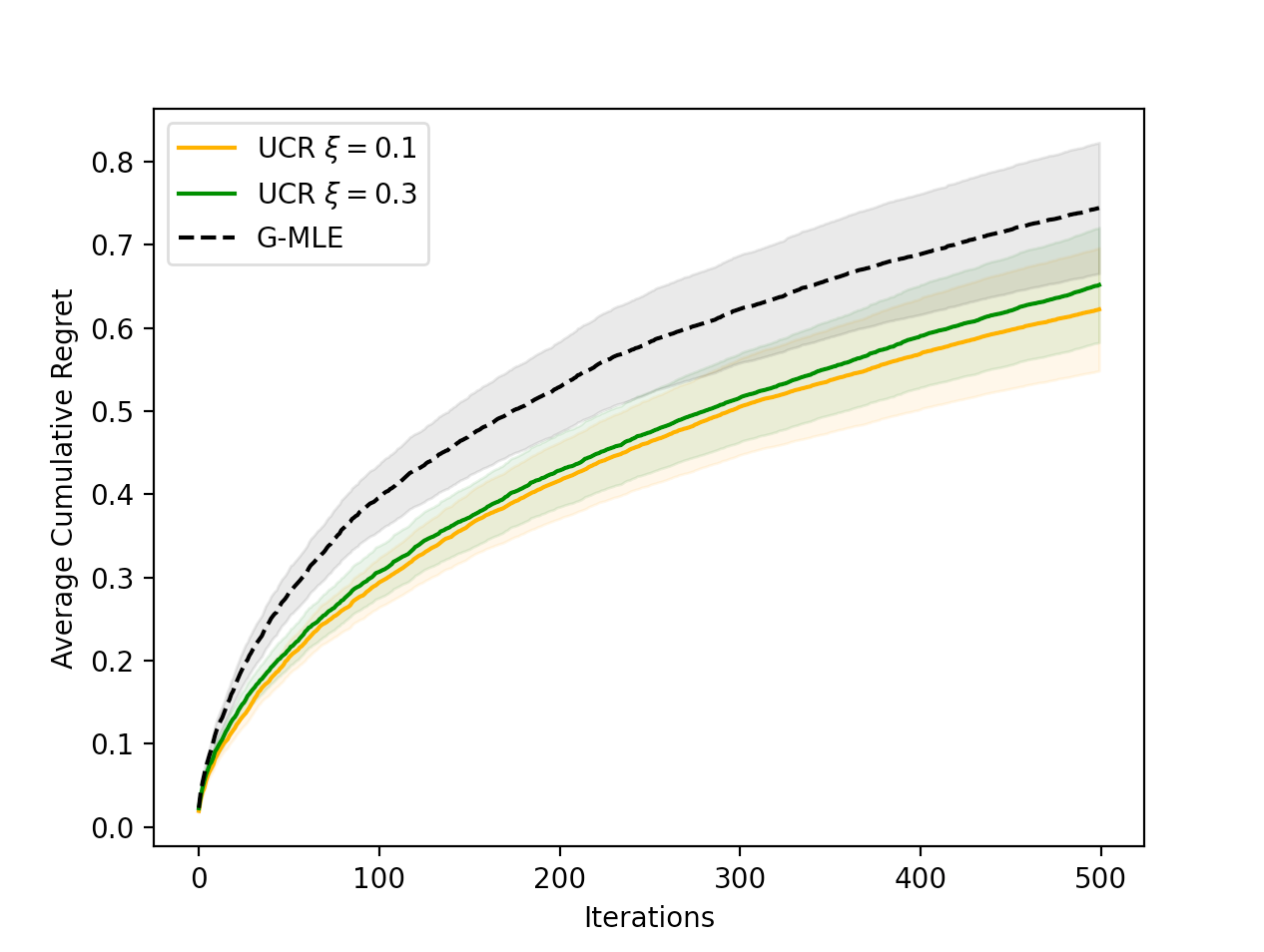}\par 
    \includegraphics[width=\linewidth]{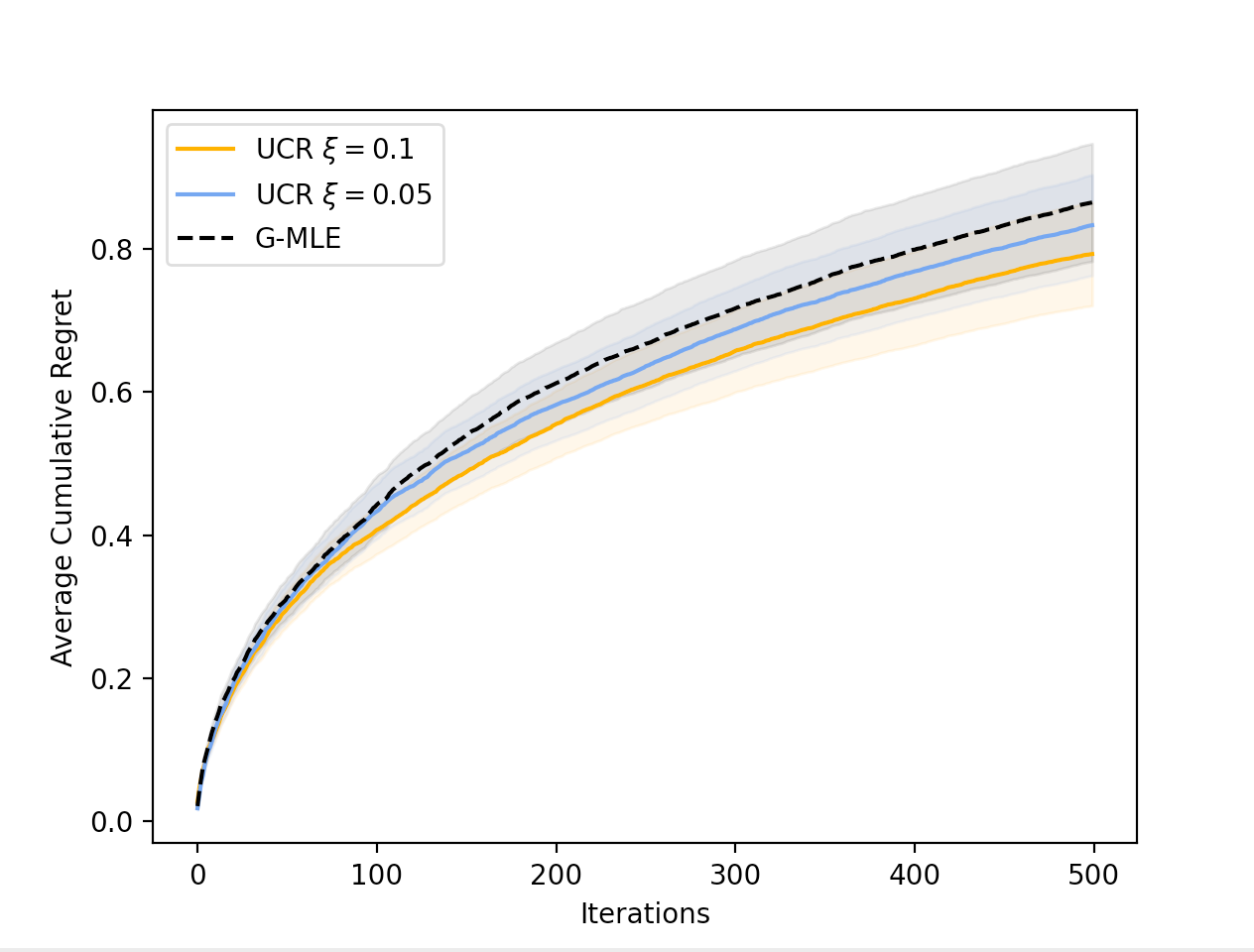}\par
    \includegraphics[width=\columnwidth]{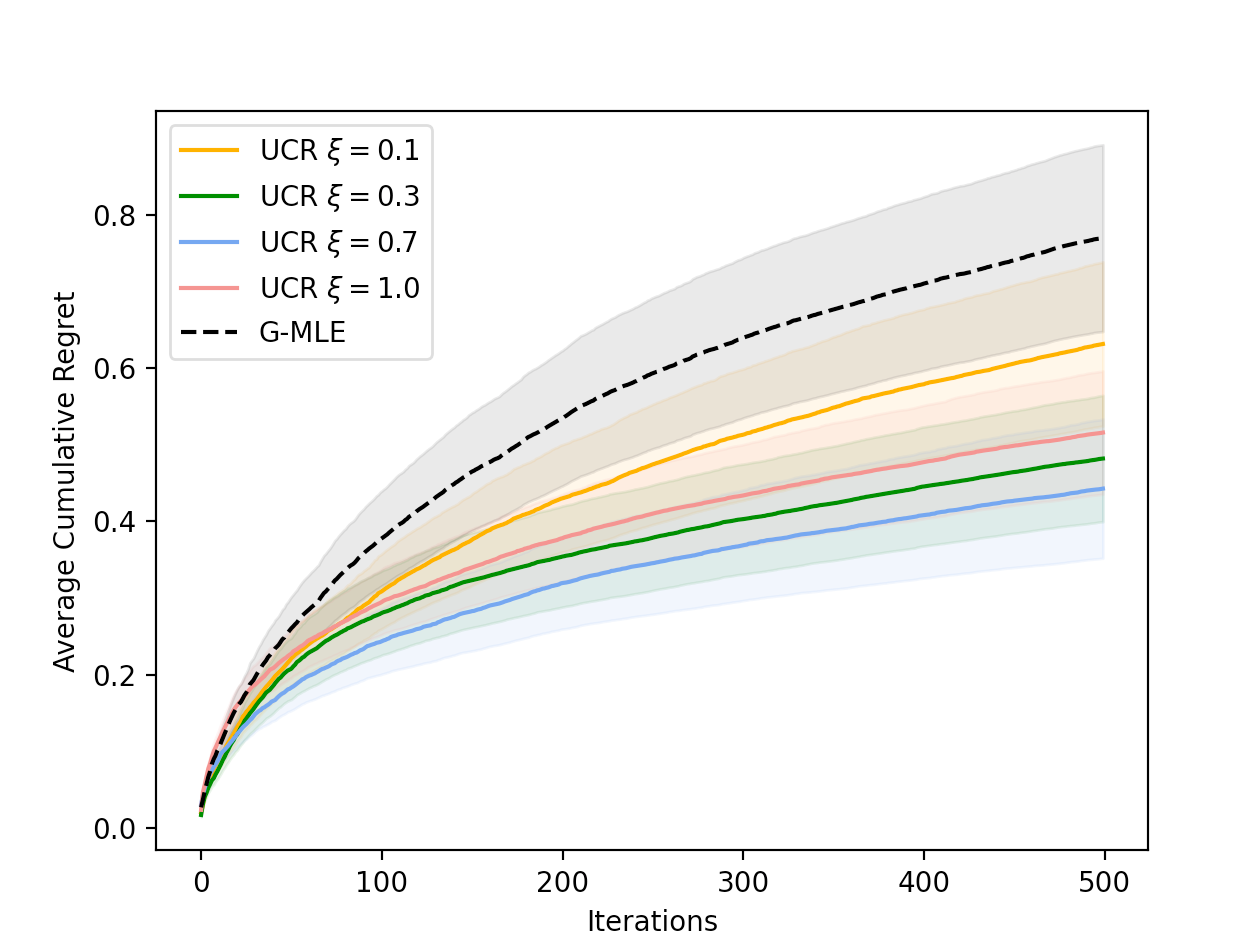}\par
    \end{multicols}
\caption{The average cumulative regret (with standard variation interval) of UCR and G-MLE in the simulated environment. The figure on the left is the result of the $N=7,K=5$ case; in the middle is the result of the $N=10,K=5$ case; the figure on the right is the result of the $N=K=5$ case.}
\label{fig:simu-results}
\end{figure*}

\begin{figure}[h]
\vskip 0.2in
\begin{center}
\centerline{\includegraphics[width=\columnwidth]{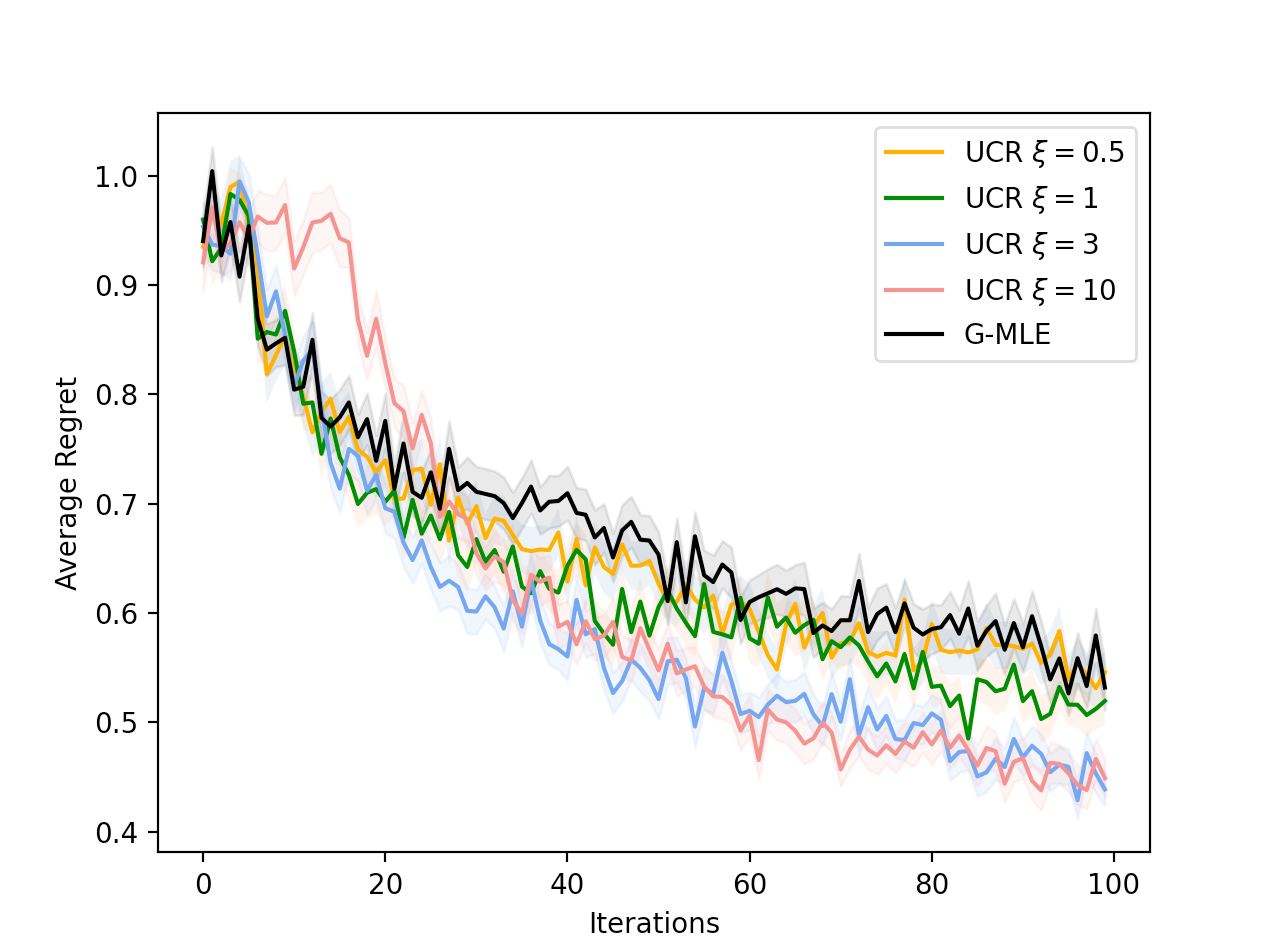}}
\caption{Average relative regret (with standard variation interval) of UCR and G-MLE on the real-world dataset. }
\label{fig:realworld-results}
\end{center}
\vskip -0.2in
\end{figure}

\section{Experiments}\label{sec:exps}

We hereby provide empirical performances of UCR and G-MLE (the comparison of which is in Remark~\ref{remark:ucr_greedy}) on \emph{a simulated dataset} and \emph{a real-world dataset}, with the goal of illuminating how the two algorithms perform across different environments. We note the inclusion of the real-world dataset to test our algorithm's robustness and potential effectiveness in practical applications, as simulated environments often present idealized scenarios.

\textbf{Simulated Environment:} We evaluate the ranking algorithms on a generated environment, where the ground truth parameters are sampled as follows: the positional effects for each item are drawn uniformly from the interval $[0, 1]$, and the contexts are drawn uniformly from a norm ball of a specified radius. We then perform ranking tasks within this generated environment. The detailed environment generation procedure is outlined in Algorithm~\ref{alg:generate_environment} and \ref{alg:generate_context} in Appendix~\ref{app:alg_proc}\footnote{The python code for executing the experiment can be found in https://github.com/arena-tools/ranking-agent.}. We run the experiments under two main settings, one with $N>K$ and another with $N=K$. For the first setting, we includes two cases of $N=7,K=5$ and $N=10,K=5$. For the second setting, we let $N=K=5$.

\textbf{Real-world Dataset:} We test UCR and G-MLE using a real-world task, with the goal to maximize click-through rates of the company’s product recommendations \footnote{For data-privacy reasons, the name of the company is not disclosed.}. An offline dataset from the historical data, collected via a heuristics-based control policy, is used to learn a simulator. This simulator generates clicks for a given set of rankings, which are formulated as the reward in Algorithm~\ref{alg:generate_click} of Appendix~\ref{appx:exp-details}. The dataset comprises 13,717 samples and 436 unique items, all used for simulator training. Bootstrap samples are taken from a subset of 259 samples and $N=$114 items with positive rewards. Most features indicate if a user has recently purchased a similar item. The summary statistics of these features are in Table~\ref{tbl:crouch} of Appendix~\ref{appx:exp-details}. We run UCR and G-MLE with $K=3$ retrieved items and update each iteration with a batch size of 30 (each batch consists of 3 items), repeatedly for 200 runs.

\subsection{Empirical Results}
We present the empirical results of cumulative regrets on simulated dataset in Figure~\ref{fig:simu-results}, over $T=500$ iterations and 300 runs. Additionally, we present the empirical results of the relative regrets on real-world dataset in Figure~\ref{fig:realworld-results}, over $T=100$ iterations and 200 runs. All experiments have a initialization phase $T_0=5$. For each setting we run several upper confidence parameters $\xi$ of UCR and present the regret curves of these instances along with those of the baseline G-MLE approach.

\paragraph{UCR consistently outperforms the G-MLE approach across different environments.} As shown in Figure~\ref{fig:simu-results}, in both $N>K$ and $N=K$ settings, UCR yields lower average cumulative regret. These results also demonstrate the importance of choosing the right upper confidence parameter $\xi$. While UCR generally outperforms MLE, the advantage can shrink with poor choices of parameter $\xi$. Despite this, UCR still outperforms the baseline G-MLE in all settings and with all reasonably chosen $\xi$, showing the algorithm’s robustness.

\paragraph{UCR maintains its advantage over G-MLE on real-world applications.} Figure~\ref{fig:realworld-results} shows that the average relative regret of UCR on the real-world dataset is lower than that of the baseline MLE approach across all instances with various chosen hyperparameters~$\xi$. Similarly as in the simulated environment experiment, we explore how the hyperparameter~$\xi$ would affect the performance of UCR. Figure~\ref{fig:realworld-results} indicates that poor choices of the hyperparameter $\xi$ could overcome the advantages of UCR over G-MLE. In this case, UCR with $\xi=0.5$ is comparable to G-MLE for large iterations; while lager $\xi$ results in smaller average relative regrets.

Overall, the results show that UCR not only outperforms G-MLE in a simulated environment but also in real-life applications, where the tasks inevitably contain more noise, and conclude the advantages of UCR further.


\section*{Acknowledgements}
We cordially thank Pratap Ranade and Engin Ural for providing a 
world class environment at Arena that has helped shape and
push an ambitious vision of this research agenda, where cutting edge active learning is finding its way to economic domains, for which this work is a specific instance.

Ruohan Zhan is partly supported by the Guangdong Provincial Key Laboratory of Mathematical Foundations for Artificial Intelligence (2023B1212010001) and the Project of Hetao Shenzhen-Hong Kong Science and Technology Innovation Cooperation Zone (HZQB-KCZYB-2020083). Zhengyuan Zhou also gratefully acknowledges the 2024 NYU CGEB faculty grant.

\section*{Impact Statement}
This paper presents work whose goal is to advance the field of Machine Learning. There are many potential societal consequences of our work, none of which we feel must be specifically highlighted here.

\nocite{langley00}

\bibliography{mybib}
\bibliographystyle{icml2024}

\newpage
\appendix
\onecolumn
\section{Additional Experiment Details}\label{appx:exp-details}
We shall introduce the details of the experiment. 
First of all, to better distinguish the bipartite matching schemes of UCR (our proposed algorithm) and G-MLE (the benchmark algorithm), we present the visualization of both in Figure~\ref{fig:comparison}.

\begin{figure}[ht]
\vskip 0.2in
\begin{center}
\centerline{\includegraphics[width=\columnwidth]{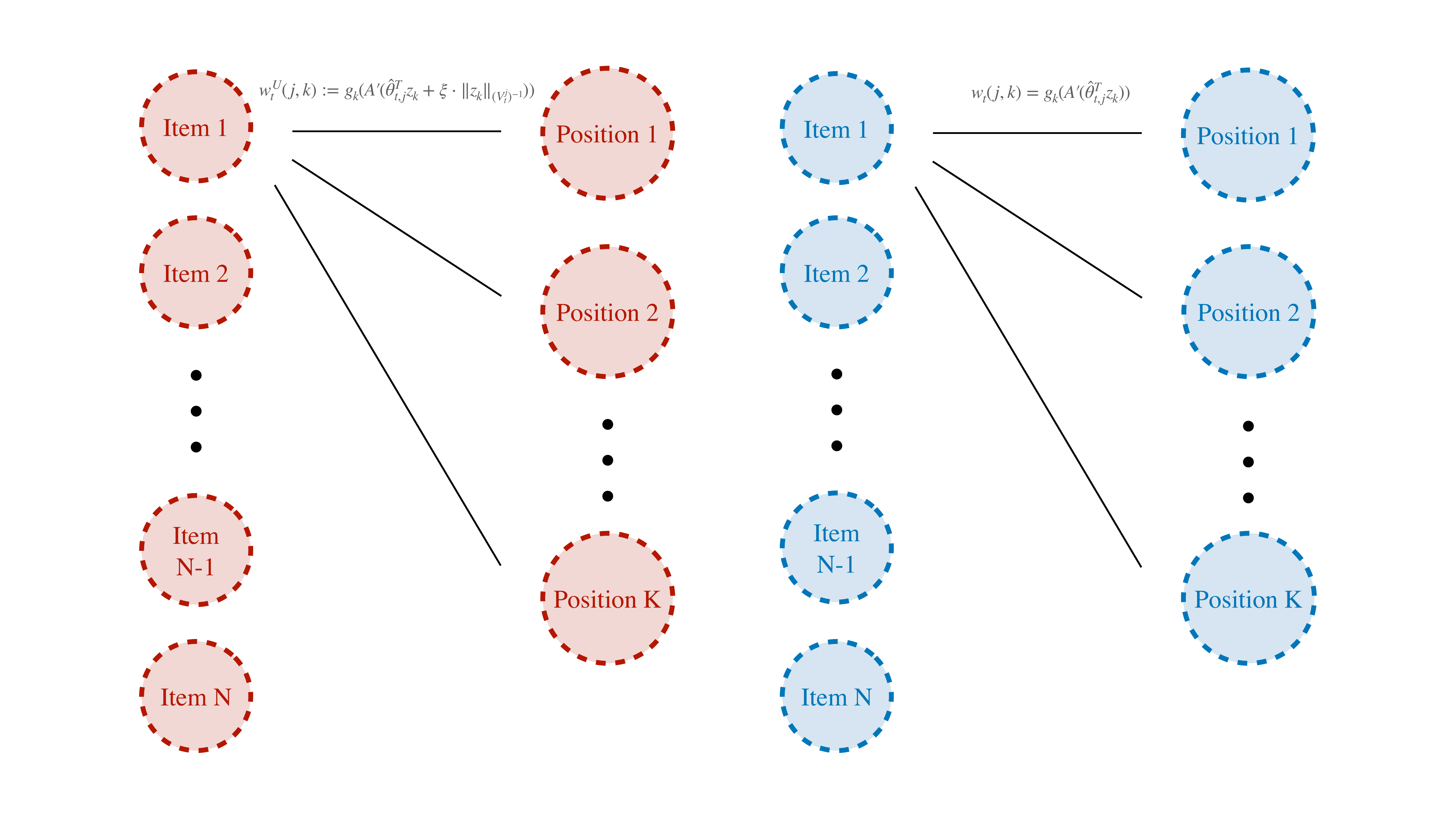}}
\caption{Visualization of Bipartite Matching of UCR and greedy MLE approach.}
\label{fig:comparison}
\end{center}
\vskip -0.2in
\end{figure}

The details for how the simulated environment are listed below. We generate the ground truth simulator parameters following Algorithm \ref{alg:generate_environment}, and for each step context is generated following Algorithm \ref{alg:generate_context} from which the algorithm makes an update and predicts the probability of clicks. In our experiments, the context dimension is set to be $d=7$.

\label{app:alg_proc}
\begin{algorithm}[H]
    \caption{Generate simulator parameters}
    \label{alg:generate_environment}
    \begin{algorithmic}[1]
        \REQUIRE number of items $N$, dimension of features $d$.
        \FOR{$n=1,\dots,N$}
        \STATE $\alpha_n\leftarrow \mbox{Uniform}[0,1].$
        \STATE $\beta_n\leftarrow \mbox{Uniform}(\calB(1,\bbR^d))$.
        \ENDFOR
        \ENSURE position effect $\{\alpha_n\}_{n=1}^N$, item features $\{\beta_n\}_{n=1}^N$.
    \end{algorithmic}
\end{algorithm}

\begin{algorithm}
    \caption{Generate context}
    \label{alg:generate_context}
    \begin{algorithmic}[1]
        \REQUIRE Context dimension $d$.
        \STATE $x\leftarrow \mbox{Uniform}(\calB(1,\bbR^d))$.
        \ENSURE $x$.
    \end{algorithmic}
\end{algorithm}

The details for the construction of the real-world experiment are as follows. We first provide the summary statistics for the real-world dataset used in our experiments in Table~\ref{tbl:crouch}. All features except Store Type are binary features indicating if the user has purchased the same type of item recently.

\begin{table}[H]
\centering
    \caption{Summary Statistics for Offline Dataset}
    \label{tbl:crouch}
    \begin{tabular}{  l  p{10.5cm}  p{1.2cm} }
        \toprule
\textbf{Features}      
& \textbf{Description}   
& \textbf{p-Value} \\\midrule
Store Type
& Represents the type of store making the purchase of the drink        
& 0       
\\\hline
Item        
& Indicated if the user recently purchased the same item
& 0 \\\hline
Liquid &
Indicates if the user recently purchased the same liquid type  
& 0.792 \\\hline
Style        
& Indicates if the user recently purchased same style of items
& 0.005 \\\hline
Brand       
& Indicates if the user recently purchased the same brand of items 
& 0.177 \\\hline
Container      
& Indicates if the user recently purchased any items of the same container type
& 0.008 \\\hline
Material      
& Indicates if the user recently purchased any items of the same container material
& 0.489 \\\hline
Case      
& Indicates if the user recently purchased any items of the same container case configuration
& 0.684 \\
        \bottomrule
    \end{tabular}
\end{table}

We simulate the reward under this setting as occurrence of clicks, which is outputted by the model as in Algorithm~\ref{alg:generate_click}.

\begin{algorithm}
    \caption{Generate click occurrence}
    \label{alg:generate_click}
    \begin{algorithmic}[1]
        \REQUIRE true environment $\{(\alpha_k, \beta_k)\}_{k=1}^K$, context $x$, ranking $\sigma$.
        \FOR{$k=1,\dots,K$}
        \STATE Sampling probability $p_k = \frac{1}{\exp(\alpha_k\sigma(k) - x^T\beta_k)}$.
        \STATE Click $Y_k \sim \mbox{Bernoulli}(p_k)$
        \ENDFOR
        \ENSURE Clicks for $K$ items $\{Y_k\}_{k=1}^K$.
    \end{algorithmic}
\end{algorithm}

\section{Extension to Feature-Based Ranking}
\label{sec:extension_feature} The UCR algorithm can be extended to perform feature-based ranking by making the model only dependent on user and product features, which allows the UCR ranking algorithm to learn the effects of interactions between the user and product and rank without prefiltering and rank an arbitrary set of items with only a single ranking model. Model the probability of clicks as 
\begin{align*}
   \bbP(Y_{t,j,k}=1 \given  X_t ; \alpha_j, \beta_j) = \bbE[Y_{t,j,k}|X_t ; \alpha_j, \beta_j] = \big(1 + e^{-\vec{z}^T(k)\vec\theta})^{-1}.
\end{align*}
$\vec\theta=[-\alpha_1, \alpha_2^T, \alpha_3^T, w_{11}, w_{12}, ..., w_{mn}]^T$ is the feature vector representing each of the product and user features and flattened correlation matrix $W$.

$\vec{z}(k)=[\sigma_i(k), \vec{x_i}, \vec{y_i}, \vec{x}_i\times \vec{y}_i]$ where $\vec{x}_i\times \vec{y}_i$ is the flattened Cartesian product of the product and user vectors at step $i$ and $\sigma_i(k)$ is the rank of item $k$ at step $i$

Then, the UCB probability can be expressed as $p(k, i)=\{1+\exp(-\alpha_1\sigma_i(k)+\alpha_2^Tx_i+ \alpha_3^Ty_i+x^T_i W y_i- 3\xi\sqrt{z^TV^{-1}z}\}^{-1}, z=(i, \vec{x_i}, \vec{y_i}, \vec{x}_i\times \vec{y}_i).$

The solution can be obtained by solving the matching problem for the bipartite graph in the same way as stated in Algorithm~\ref{alg:sub_ucb}.



\section{Omitted Technical Proofs}

\subsection{Derivation of Equation~\eqref{eq:glm_model}}
Denote the log-likelihood function $l(X_t,\beta_j,\alpha_j;y)=\log f_{Y_{t,j,k}\mid X_t; \beta_j,\alpha_j}(y)$ as a function of $(X_t;\beta_j,\alpha_j)$ and $y$. The mean of $Y_{t,j,k}\mid X_t; \beta_j,\alpha_j$ can be derived from the known relations of exponential family: $\mathbb{E}[\frac{\partial l(X_t,\beta_j,\alpha_j;y)}{\partial (\alpha_j k + \beta_j^T X_t)}]=0$. Define $\hat{h}(Y_{t,j,k},\tau)=\exp(h(Y_{t,j,k},\tau))$, we have that
\begin{align*}
l(X_t,\beta_j,\alpha_j;y)=\frac{y(\alpha_j k + \beta_j^T X_t) - A(\alpha_j k + \beta_j^T X_t)}{d(\tau)}+\hat{h}(y,\tau),
\end{align*}
and therefore
\begin{align*}
\frac{\partial l(X_t,\beta_j,\alpha_j;y)}{\partial (\alpha_j k + \beta_j^T X_t)}=\frac{y-A'(\alpha_j k + \beta_j^T X_t)}{d(\tau)}.
\end{align*}
Plugin the result back into $\mathbb{E}[\frac{\partial l(X_t,\beta_j,\alpha_j;y)}{\partial (\alpha_j k + \beta_j^T X_t)}]$, we have that
\begin{align*}
0=\mathbb{E}[\frac{\partial l(X_t,\beta_j,\alpha_j;y)}{\partial (\alpha_j k + \beta_j^T X_t)}]=\frac{\mathbb{E}[Y_{t,j,k}\mid X_t;j,k]-A'(\alpha_j k + \beta_j^T X_t)}{d(\tau)},
\end{align*}
hence
\begin{align*}
\mathbb{E}[Y_{t,j,k}\mid X_t;j,k]=A'(\alpha_j k + \beta_j^T X_t).
\end{align*}
For more details, please refer to \cite{mccullagh2019generalized}.

\subsection{Proof of Theorem~\ref{thm:n-choose-k-regret}}

\label{app:thm_general}

\begin{proof}[Proof of Theorem~\ref{thm:n-choose-k-regret}]

Step (i) in our proof sketch is in Lemma~\ref{lemma:eigenvaluebound-nchoosek}, whose proof is 
in Appendix~\ref{sec:eigenvaluebound-nchoosek}. 

\begin{lemma}\label{lemma:eigenvaluebound-nchoosek}
Fix any $\delta\in(0,1)$, and let $c_1:=\min\{\frac{1}{2K},c_x\}$ and $B>0$ be any positive constant. Suppose $T_0\geq\max\{(\frac{16}{3c_1}+\frac{32(K+N)^2}{N^2c_1})\log\frac{2(d+1)}{\delta},\frac{2B}{c_1}\}$, then with probability at least $1-\delta$, we have $\lambda_{\min}(V_j^{(t)})\geq B$ for all $t\geq T_0$ and all $j\in[N]$.
\end{lemma}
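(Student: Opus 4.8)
The plan is to exploit monotonicity of the covariance matrices in the Loewner order and thereby reduce the claim to a single concentration statement about the random-initialization phase. Since $V_j^{(t)} = V_j^{(t-1)} + \ind\{j\in s(X_t)\}\,z_{t,j}z_{t,j}^\top \succeq V_j^{(t-1)}$, the map $t\mapsto V_j^{(t)}$ is nondecreasing, so $\lambda_{\min}(V_j^{(t)}) \ge \lambda_{\min}(V_j^{(T_0)})$ for every $t\ge T_0$. It therefore suffices to show $\lambda_{\min}(V_j^{(T_0)})\ge B$ for each fixed $j$ using only the contributions accumulated during the $T_0$ random-initialization rounds, and then to union bound over $j\in[N]$. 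The payoff of restricting to the initialization phase is that, for fixed $j$, the summands $Y_\tau := \ind\{j\in s(X_\tau)\}\,z_{\tau,j}z_{\tau,j}^\top$ are i.i.d.\ across $\tau$: during this phase the retrieved set $s(X_\tau)$, the random ranking, and the context $X_\tau\sim\PP_X$ are all drawn independently of the history.

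First I would compute $\Phi_j := \EE[Y_\tau]$ and lower bound $\lambda_{\min}(\Phi_j)$. Writing $z_{\tau,j}=(\tilde\sigma_\tau, X_\tau)$ with $\tilde\sigma_\tau$ the rescaled position of item $j$, the matrix $\Phi_j$ has a block structure: a scalar position block, a $d\times d$ context block, and a position-context cross block. The context block is controlled by Assumption~\ref{assump:context} (the conditional covariance $\Sigma_{x,j}\succeq c_x I$ given that $j$ is retrieved), the position block is bounded below by the variance of the uniformly assigned rescaled position, and the selection probability $K/N$ enters as a multiplicative prefactor. After controlling the cross block — either through a Schur-complement argument, or by using that the position is assigned independently of the context during initialization — this yields a lower bound on $\lambda_{\min}(\Phi_j)$ of order $c_1=\min\{\tfrac{1}{2K},c_x\}$, so that $\mu_{\min}:=\lambda_{\min}(\EE[V_j^{(T_0)}]) = T_0\,\lambda_{\min}(\Phi_j)$ is of order $T_0 c_1$.

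The concentration step then applies a matrix Bernstein (equivalently matrix Chernoff) inequality for sums of i.i.d.\ positive-semidefinite matrices in dimension $d+1$. Each summand obeys $\lambda_{\max}(Y_\tau)\le\|z_{\tau,j}\|^2\le 1$ by the rescaling in Assumption~\ref{assump:context}, and the matrix variance proxy is bounded via $Y_\tau^2\preceq Y_\tau$. Requiring that the lower tail leave $\lambda_{\min}(V_j^{(T_0)})\ge\tfrac12\mu_{\min}$ produces exactly the two stated constraints on $T_0$: the centering condition $\tfrac12 T_0 c_1\ge B$, i.e.\ $T_0\ge 2B/c_1$, and the tail condition that the Bernstein deviation term be controlled with probability $1-\delta/N$. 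The two pieces of that deviation term — from the variance proxy and from the uniform operator-norm bound $1$ — are what produce the $\tfrac{32(K+N)^2}{N^2 c_1}$ and $\tfrac{16}{3c_1}$ coefficients respectively, while the dimension factor $d+1$ appears inside the logarithm as $\log\frac{2(d+1)}{\delta}$. A union bound over the $N$ items then completes the argument.

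The hard part will be the eigenvalue lower bound on $\Phi_j$. Because the augmented feature couples a scalar position coordinate with the $d$-dimensional context, I must argue that the cross block cannot drive the smallest eigenvalue of the full $(d+1)\times(d+1)$ expected matrix to zero, and I must track constants carefully enough that the selection probability $K/N$, the position variance, and the context-variation constant $c_x$ combine into the stated $c_1$ and the precise $T_0$ thresholds; reconciling the $K/N$ prefactor against $c_1$ is the most delicate piece of bookkeeping. By contrast, the concentration step itself is routine once the per-round second-moment and operator-norm bounds are in place.
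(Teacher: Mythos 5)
Your proposal follows essentially the same route as the paper's proof: reduce to $t=T_0$ via Loewner monotonicity of $V_j^{(t)}$, exploit the i.i.d.\ structure of $\ind\{j\in s(X_\tau)\}z_{\tau,j}z_{\tau,j}^\top$ during random initialization, lower-bound the smallest eigenvalue of the block-structured expectation (with the position--context cross block vanishing by independence), and apply matrix Bernstein with the two-part condition on $T_0$ (the centering requirement $T_0 c_1/2 \geq B$ plus tail control yielding the two stated coefficients). The small variations---making the union bound over $j\in[N]$ explicit and phrasing the concentration via the PSD Chernoff/variance-proxy form rather than centering the summands---do not change the substance of the argument.
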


Let $k_{t,j}$ be the position assigned to item $j$ at $t$ 
if item $j$ is included in the recommended list. Then following the upper bound in Lemma~\ref{lem:regret_general-nchoosek}, we have
\begin{align*}
R_T-R_{T_0}\leq \sum_{t=T_0}^T \sum_{j=1}^N \ind\{j\in s_t(X_t)\} \cdot  c_{k_{t,j}}  \cdot \|Z_{t,j}\|_{(V^{(t)}_j)^{-1}} 
\leq \bar{c}   \cdot \sum_{j=1}^N\sum_{t\in \cT_j,t\geq T_0}   \|Z_{t,j}\|_{(V_j^{(t)})^{-1}} ,
\end{align*}
where $\bar{c}=\max_{k}c_k$. 
%
We then use the self-normalized concentration inequality (c.f.~Lemma~\ref{lem:self_normal}) to bound $\sum_{t\in \cT_j,t\geq T_0}   \|Z_{t,j}\|_{(V_j^{(t)})^{-1}}$. 
Under the notations of Lemma~\ref{lem:self_normal}, 
fixing any $j\in [N]$, we let
$\bar\cT_j = \{s\colon s\in \cT_j,s\geq T_0\} = \{s_{j,1},s_{j,2},\dots,s_{j,|\bar\cT_j|}\}$; that is, $s_{j,t}$ is the $t$-th time after $T_0$ that item $j$ appears in the  recommended list. 
Then setting $X_t = Z_{s_{j,t},j}$, 
$V= V_j^{(T_0)}$, we note that 
$\bar{V}_{t} := V_j^{(s_{j,t})}=V + \sum_{i=1}^t X_{s_{j,t}}X_{s_{j,t}}^\top$. 
Also, on the event in Proposition~\ref{prop:glm_err-n-choose-k}
we see $\lambda_{\min}(V_s^{(T_0)})\geq 1$ 
while $\|Z_{t,s}\|\leq 1$. 
Thus, invoking Lemma~\ref{lem:self_normal}, we have 
\begin{align*}
\sum_{t\in \cT_j,t\geq T_0}   \|Z_{t,j}\|_{(V^{(t)}_j)^{-1}}
&\leq 2 \log \bigg(\frac{\det (V_j^{(T)})}{\det (V_j^{(T_0)})}\bigg) \\
&\leq 2d \log\bigg( \frac{\tr(V) +|\bar\cT_j| }{d} \bigg) - 2\log \det V_j^{(T_0)} ,
\end{align*}
where $\tr(V)\leq \sum_{i=1}^{T_0}\tr(Z_{i,j}Z_{i,j}^\top )\leq T_0$, and $\det (V_j^{(T_0+1)})\geq 1$ since 
$\lambda_{\min}(V_{j}^{(T_0+1)})\geq   \lambda_{\min}(V_j^{(T_0)})\geq 1$. Therefore, by the Cauchy-Schwarz inequality, we have 
\[
\sum_{t\in \cT_j,t\geq T_0}   \|Z_{t,j}\|_{(V^{(t)}_j)^{-1}} 
\leq \bigg( |\bar\cT_j| \sum_{t\in \cT_j,t\geq T_0}   \| Z_{t,j} \|_{(V_j^{(t)})^{-1}} ^2 \bigg)^{1/2}
\leq \sqrt{|\bar\cT_j|} \cdot \sqrt{2d\log(T/d)} 
\]
simultaneously for all $s\in[K]$
on the events in Proposition~\ref{prop:glm_err-n-choose-k}. 
Since $|\bar\cT_j|\leq |\cT_j|$, this further implies 
\$
R_T &\leq R_{T_0 } + 2\xi\cdot M_1 \cdot \bar{c}\sum_{j=1}^N \sqrt{|\cT_j|} \cdot \sqrt{2d\log(T/d)} \\
&\leq R_{T_0 } + 2\xi\cdot M_1 \cdot \bar{c} \cdot \sqrt{N} \cdot \sqrt{\sum_{j=1}^N  |\cT_j|} \cdot \sqrt{2d\log(T/d)} \\
&\leq R_{T_0 } + 2\xi\cdot M_1 \cdot \bar{c} \cdot \sqrt{NKT} \cdot   \sqrt{2d\log(T/d)} \\
&\leq R_{T_0} +\frac{2\sqrt{3} \bar\sigma }{\kappa } \cdot M_1\cdot \bar{c} \cdot \sqrt{NKT} \cdot \sqrt{ (d+1) \log(1+2T/d) +   \log(2/\delta)} \cdot  \sqrt{ 2d \log(T/d)} \\ 
&\leq R_0 T_0 + \frac{5\sigma}{\kappa}\cdot  M_1\cdot \bar{c} \cdot d\sqrt{NK T } \log(T/(d\delta))
\$
with probability at least $1-\delta$. 
Above, the second line uses the Cauchy-Schwarz inequality, and the third line uses the fact that $\sum_{i=1}^N |\cT_j| = \sum_{i=1}^N \sum_{t=1}^T \ind\{j\in s_t(X_t)\} = \sum_{t=1}^T|s_t(X_t)| = KT$. We thus conclude the proof.
\end{proof}

\subsection{Proof of Lemma~\ref{lemma:eigenvaluebound-nchoosek}}
\label{sec:eigenvaluebound-nchoosek}

\begin{proof}
By the definition of $V_j^{(t)}$, for any $t\geq T_0$, we have
\[
V_j^{(t)}=V_j^{(T_0)}+\sum_{i=T_0}^{t-1}z_j^{(i)}(z_j^{(i)})^{\top}\succeq V_j^{(T_0)}.
\]
Hence $\lambda_{\min}(V_j^{(t)})\geq\lambda_{\min}(V_j^{(T_0)})$. It suffices to bound $\lambda_{\min}(V_j^{T_0})$ simultaneously for all $j\in[N]$. Consider the distribution of
\[
V_j^{(T_0)}=\sum_{t=1}^{T_0-1}\mathbbm{1}\{j\in(X_t)\}Z_{t,j}(Z_{t,j})^{\top}.
\]
Due to the random sampling of $s_t(X_t)$ for all $t<T_0$, the matrix $V_j^{T_0}$ are summations of i.i.d. matrices with expectation
\begin{align*}
\Sigma_j:=\mathbb{E}[\mathbbm{1}\{j\in(X_t)\}Z_{t,j}(Z_{t,j})^{\top}]=\frac{K}{N}\mathbb{E}[Z_{t,j}(Z_{t,j})^{\top}],
\end{align*}
where
\begin{align*}
\mathbb{E}[Z_{t,j}(Z_{t,j})^{\top}]=\begin{pmatrix}
\mathbb{E}[\text{Unif}(-\frac{1}{2}+\frac{1}{K},-\frac{1}{2}+\frac{2}{K},\cdots,\frac{1}{2})]&0\\
0&\mathbb{E}[X_tX_t^{\top}]
\end{pmatrix}=\begin{pmatrix}
\frac{1}{2K}&0\\
0&\Sigma_{x,j}
\end{pmatrix}.
\end{align*}
Note that $(Z_{t,j})=(\sigma_t(j),X_t)$, and since $\sigma_{t}(j)$'s are rescaled, they follow Unif$(-\frac{1}{2}+\frac{1}{K},-\frac{1}{2}+\frac{2}{K},\cdots,\frac{1}{2})$. By the independence of $\sigma_t(j)$ and $X_t$, we derive the above result.

Hence $\lambda_{\min}(\Sigma_j)\geq\min\{\frac{1}{2K},\lambda_{\min}(\Sigma_{x,j})\}\geq c_1:=\min\{\frac{1}{2K},c_x\}$, where $\lambda_{\min}(\Sigma_{x,j})\geq c_x$. By Jensen's inequality, we have $\|\Sigma_j\|_{op}=\frac{K}{N}\|\mathbb{E}[Z_{t,j}(Z_{t,j})^{\top}]\|_{op}\leq\frac{K}{N}\mathbb{E}[\|Z_{t,j}\|^2]\leq\frac{K}{N}$. By the independence of each ranking before $T_0$, the random matrices $\{z_{t,j}(z_{t,j})^{\top}-\Sigma_{j}\}_{t=1}^{T_0-1}$ are i.i.d. centered in $\mathbb{R}^{(d+1)\times(d+1)}$ with uniformly bounded matrix operator norm:
\[
\|z_{t,j}(z_{t,j})^{\top}-\Sigma_{j}\|_{op}\leq\|z_{t,j}(z_{t,j})^{\top}\|+\|\Sigma_j\|_{op}=1+\frac{K}{N}=\frac{K+N}{N}.
\]

Now, define $\bar{V}_j:=V_j^{(T_0)}-T_0\Sigma_j=\sum_{t=1}^{T_0-1}\{z_{t,j}(z_{t,j})^{\top}-\Sigma_{j}\}$. By triangle inequality,
\begin{align*}
\|\mathbb{E}[\bar{V}_j\bar{V}_j^{\top}]\|_{op}=&\bigg\|\sum_{t=1}^{T_0-1}\mathbb{E}[\{z_{t,j}(z_{t,j})^{\top}-\Sigma_{j}\}\{z_{t,j}(z_{t,j})^{\top}-\Sigma_{j}\}^{\top}]\bigg\|_{op}\\
\leq&T_0\cdot\|\mathbb{E}[\{z_{t,j}(z_{t,j})^{\top}-\Sigma_{j}\}\{z_{t,j}(z_{t,j})^{\top}-\Sigma_{j}\}^{\top}]\|_{op}\\
\leq&T_0\cdot\mathbb{E}\|\{z_{t,j}(z_{t,j})^{\top}-\Sigma_{j}\}\{z_{t,j}(z_{t,j})^{\top}-\Sigma_{j}\}^{\top}\|_{op}\\
\leq&T_0\cdot\mathbb{E}\|z_{t,j}(z_{t,j})^{\top}-\Sigma_{j}\|_{op}\|z_{t,j}(z_{t,j})^{\top}-\Sigma_{j}\}^{\top}\|_{op}\leq T_0\bigg(\frac{K+N}{N}\bigg)^2.
\end{align*}
Similarly $\|\mathbb{E}[\bar{V}_j^{\top}\bar{V}_j]\|_{op}\leq T_0\frac{K+N}{N}$. Then by the matrix Bernstein's inequality, for any $t\geq0$:
\[
\mathbb{P}(\|\bar{V}_j\|_{op}\geq t)\leq2(d+1)\cdot\exp\bigg(-\frac{\frac{t^2}{2}}{T_0(\frac{K+N}{N})^2+\frac{2t}{3}}\bigg).
\]
The right-handed side is not greater than $\delta$ for any $t$ such that $t\geq\sqrt{2T_0(\frac{K+N}{N})^2\log\frac{2(d+1)}{\delta}}$ and $t\geq\frac{4}{3}\log\frac{2(d+1)}{\delta}$. Thus, with probability at least $1-\delta$, we have
\[
\|\bar{V}_j^{(T_0)}-T_0\Sigma_j\|_{op}\leq \sqrt{2T_0\bigg(\frac{K+N}{N}\bigg)^2\log\frac{2(d+1)}{\delta}}+\frac{4}{3}\log\frac{2(d+1)}{\delta},
\]
and hence
\[
\lambda_{\min}(V_j^{T_0})\geq T_0\cdot\lambda_{\min}(\Sigma_j)-\sqrt{2T_0\bigg(\frac{K+N}{N}\bigg)^2\log\frac{2(d+1)}{\delta}}-\frac{4}{3}\log\frac{2(d+1)}{\delta}.
\]
This implies that
\[
\lambda_{\min}(V_j^{T_0})\geq T_0/2\cdot\lambda_{\min}(\Sigma_j),
\]
as long as $\sqrt{2T_0(\frac{K+N}{N})^2\log\frac{2(d+1)}{\delta}}\leq T_0/4\cdot\lambda_{\min}(\Sigma_j)$ and $\frac{4}{3}\log\frac{2(d+1)}{\delta}\leq T_0/4\cdot\lambda_{\min}(\Sigma_j)$. Setting $T_0\geq(\frac{16}{3c_1}+\frac{32(K+N)^2}{N^2c_1})\log\frac{2(d+1)}{\delta}$, we have
\[
\lambda_{\min}(V_j^{T_0})\geq T_0c_1/2
\]
with probability at least $1-\delta$. Further let $T_0\geq\max\{(\frac{16}{3c_1}+\frac{32(K+N)^2}{N^2c_1})\log\frac{2(d+1)}{\delta},\frac{2B}{c_1}\}$, we have $\lambda_{\min}(V_j^{(T_0)})\geq B$ with probability at least $1-\delta$.
\end{proof}

\subsection{Proof of Proposition~\ref{prop:glm_err-n-choose-k}}

\label{app:prop_general}

\begin{proof}[Proof of Proposition~\ref{prop:glm_err-n-choose-k}]
Fix any $j\in [N]$. Throughout the proof, we condition on $\{X_i\}_{i=1}^T$, and define the martingale 
$\{\cH_{\tau,j}\}_{\tau=0}^{T}$, where 
\$
\cH_{\tau,j} = \sigma\big( \{Z_{1,i},Y_{1,i}, \cdots, Z_{\tau-1,i}, Y_{\tau-1,i}, Z_{\tau,i}\}_{i=1}^n\big)
\$
is the history of all rewards $Y_{t,i}$ for items that appeared in the recommended lists up to time $\tau-1$, and the ranking decisions $Z_{\tau,i}$ up to time $\tau$. 
We slightly deviate from the notation in the main text and use $Y_{t,i}$ for ease of presentation. 
Here without loss of generality, we impose $Y_{t,i}=0$ and $Z_{t,i}=0$ if $i\notin s_t(X_t)$. For any $t\geq 1$, we define 
\$
\cT_{t}^{(j)} = \{ \tau \leq t-1\colon j\in s(X_\tau)\}
\$
be the set of time steps where item $j$ appears in the recommended list.

For any $t\in [T]$ and $j\in[N]$, we define
\$
L_{t,j}(\theta) =  
\sum_{\tau \in \mathcal{T}_t^{(j)} } Y_{\tau,j} \theta^\top Z_{\tau,j} - A(\theta^\top Z_{\tau,j}) ,\quad 
\nabla L_{t,j}(\theta) = \sum_{\tau \in \mathcal{T}_t^{(j)} } Y_{\tau,j}  Z_{\tau,j} - A'(\theta^\top Z_{\tau,j})Z_{\tau,j}.
\$
By the first-order condition of the MLE for $\hat\theta_{t,j}$, we have $\nabla L_{t,j}(\hat\theta_{t,j})=0$, and 
\$
\nabla L_{t,j}(\theta_j) = \sum_{\tau \in \mathcal{T}_t^{(j)} } Z_{\tau,j}\big(Y_{\tau,j}   - A'(\theta_j^\top Z_{\tau,j})\big):= \sum_{\tau \in \mathcal{T}_t^{(j)} } Z_{\tau,j}\epsilon_{\tau,j},
\$
where $\epsilon_{\tau,j}:=Y_{\tau,j}   - A'(\theta_j^\top Z_{\tau,j})$ obeys 
$\EE[\epsilon_{\tau,j}\given \cH_{\tau,j}] = 0$ due to our models. 
By the mean value theorem, there exists 
some $\tilde\theta_{t,j}$ 
that lies on the segment between $\theta_j$ and $\hat\theta_{t,j}$, such that 
\$
\sum_{\tau=1}^{t-1} Z_{\tau,j}\epsilon_{\tau,j}
= \nabla L_{t,j}(\theta_j) - \nabla L_{t,j}(\hat\theta_{t,j})
= \nabla^2 L_{t,j}(\tilde\theta_{t,k})(\theta_j - \hat\theta_{t,j}),
\$ 
where $\nabla^2 L_{\tau,j}(\theta)$ 
is the Hessian matrix of $L_{\tau,j}$ at $\theta$. That is, 
\$
\sum_{\tau=1}^{t-1} Z_{\tau,j}\epsilon_{\tau,j}
= \sum_{\tau=1}^{t-1} A''(\tilde\theta_{t,j}^\top Z_{\tau,j})Z_{\tau,j}Z_{\tau,j}^\top (\theta_j - \hat\theta_{t,j}).
\$
Recalling  (ii) $\kappa := \inf_{\|z\|\leq 1, \|\theta-\theta_k\|\leq 1}  {A''}(\theta^\top z) >0$ 
in Assumption~\ref{assump:regularity}(b), 
and noting that $V_j^{(t)}=\sum_{\tau=1}^{t-1} Z_{\tau,j}Z_{\tau,j}^\top$, 
we know that 
\begin{align}\label{eq:bd1-nchoosek}
\Big\|\sum_{\tau=1}^{t-1} Z_{\tau,j}\epsilon_{\tau,j}\Big\|_{(V_j^{(t)})^{-1}}^2 \geq \kappa^2 \|\hat\theta_{t,j}-\theta_k\|_{V_j^{(t)}}^2
\end{align}
holds as long as $\|\hat\theta_{t,j}-\theta_j\|_2\leq 1$. 

In the sequel, we are to show that 
$\|\hat\theta_{t,j}-\theta_j\|_2\leq 1$ with high probability. 
Let 
\$
S_{t,j}(\theta) = \nabla L_{t,j}(\theta_j) - \nabla L_{t,j}(\theta) 
= \sum_{\tau \in \mathcal{T}_t^{(j)} } \big( A'(\theta_j^\top Z_{\tau,j}) - A'(\theta^\top Z_{\tau,j})\big) Z_{\tau,j}.
\$
Note that $A'$ is increasing, hence $A''(\cdot)>0$. Also, as long as $V_j^{(t)}>0$, we know that $S_{t,j}(\theta)$ is strictly convex in $\theta\in \RR^{d+1}$, and thus $S_{t,j}$ is an injection from $\RR^{d+1}$ 
to $\RR^{d+1}$. 
Furthermore, for any $\theta$ with $\|\theta-\theta_j\|\leq 1$,  
there exists some $\tilde\theta$ that lies on the segment between $\theta$ and $\theta_j$ such that
\$
\|S_{t,j}(\theta) \|_{(V_j^{(t)})^{-1}}^2 
= \Big\| \sum_{\tau=1}^{t-1} A''(\tilde\theta^\top Z_{\tau,j})  Z_{\tau,j}Z_{\tau,j}^\top (\theta_j -\theta)\Big\|_{(V_j^{(t)})^{-1}}^2 
\geq \kappa^2 \lambda_{\min}(V_j^{(t)}) \|\theta_j-\theta\|^2.
\$
The above arguments verify the conditions needed in \citet[Lemma~A]{chen1999strong}; it then implies  
for any $r>0$, 
\#\label{eq:lemA_chen_gen-nchoosek}
\Big\{\theta\colon \|S_{t,j}(\theta) \|_{(V_j^{(t)})^{-1}}^2 \leq \kappa ^2 r^2 {\lambda_{\min}(V_j^{(t)})} \Big\} \subseteq \{\theta\colon \|\theta-\theta_j\|\leq r\}.
\#
Note that~\eqref{eq:lemA_chen_gen-nchoosek} is a deterministic result. 
Recall that $S_{t,j}(\hat\theta_{t,j}) = \sum_{\tau \in \mathcal{T}_t^{(j)} } Z_{\tau,j}\epsilon_{\tau,j}$, 
and $\epsilon_{\tau,j}$ are mean-zero conditional on $\cH_{\tau,j}$. We now use a more coarse martingale 
\$
\cH_{i,j}^{0} = \cH_{\tau_{i,j},j},
\$
where $\tau_{i,j}$ is the $i$-th time that $j$ appears in $s_t(X_t)$; put differently, $\tau_{i,j}$ is the $i$-th smallest member in $\mathcal{T}_T^{(j)}$. 
We know that for any $\tau=\tau_{i,j}$ for any $i\leq |\cT_t^{(j)}|$, we still have $\EE[\epsilon_{\tau,j} \given \cH_{i,j}^0] =0$ due to the fact that $\EE[\epsilon_{\tau,j}\given \cH_{\tau,j}] = 0$  we discussed before.  
Therefore, we can invoke the concentration inequality of self-normalized processes in Lemma~\ref{lem:self_normalize_t}
for 
\$\sum_{\tau \in \mathcal{T}_t^{(j)} } Z_{\tau,j}\epsilon_{\tau,j}
= \sum_{i=1}^{|\mathcal{T}_t^{(j)}|} Z_{\tau_{i,j},j} \epsilon_{\tau_{i,j},j} \quad \textrm{and}\quad 
\bar{V}_{t,j} := \lambda I + \sum_{\tau \in \mathcal{T}_t^{(j)} } Z_{\tau,j}Z_{\tau,j}^\top 
= \lambda I + \sum_{i=1}^{|\mathcal{T}_t^{(j)}|} Z_{\tau_{i,j},j}Z_{\tau_{i,j},j}^\top 
\$ 
for some fixed $\lambda>0$, which yields that 
with probability at least $1-\delta$, 
it holds for all $t\geq 1$ that 
\$
\Big\|\sum_{\tau \in \mathcal{T}_t^{(j)} } Z_{\tau,j}\epsilon_{\tau,j}\Big\|_{\bar{V}_{t,j}^{-1}}^2 \leq 2 \bar\sigma^2 \log\Bigg( \frac{\det (\bar{V}_{t,j})^{1/2} \det(\lambda I)^{-1/2}}{\delta} \Bigg).
\$
Note that $\det(\lambda I) = \lambda^{d+1}$, and $\det(\bar{V}_{t,j})\leq (\lambda_{\max}(\bar{V}_{t,j}))\leq (\lambda + |\mathcal{T}_t^{(j)}|/d)^{d+1}$ since $\|Z_{\tau,j}\|_2\leq 1$ by Lemma~\ref{lem:determ}. We then have 
\#\label{eq:bd_barV_gen-nchoosek}
\Big\|\sum_{\tau \in \mathcal{T}_t^{(j)} } Z_{\tau,j}\epsilon_{\tau,j}\Big\|_{\bar{V}_{t,j}^{-1}}^2 \leq 2 \bar\sigma^2 \big( (d+1) \log(1+|\mathcal{T}_t^{(j)}|/(d\lambda)) + \log(1/\delta)\big) 
\#
with probability at least $1-\delta$ for all $t\geq 1$. 

By Lemma~\ref{lemma:eigenvaluebound-nchoosek}, we know that
\#\label{eq:lower_Vj_eigen-nchoosek}
\lambda_{\min}(V_j^{(t)})\geq \frac{3 \bar\sigma^2 }{\kappa^2}\big( (d+1) \log(1+2t/d) + \log(1/\delta)\big)
\#
holds with probability at least $1-\delta$ since 
$
T_0 \geq \max\big\{  (\frac{16}{3c_1}+\frac{32(K+N)^2}{N^2c_1})\log\frac{2(d+1)}{\delta} , ~\frac{6\bar\sigma^2}{c_1\kappa^2} ( (d+1) \log(1+2t/d) + \log(1/\delta) )  \big\}.
$ 
Finally, since $\lambda_{\min}(V_j^{(t)})\geq \lambda_{\min}(V_j^{(T_0)}) \geq 1$ 
for $t\geq T_0$,  take $\lambda=1/2$ and note that 
\#\label{eq:bd_barV_hatV_gen-nchoosek}
\bar{V}_{t,j} = \lambda I + V_j^{(t)} 
\preceq (1+\lambda) V_j^{(t)}
 = 3/2\cdot V_j^{(t)}.
\#
By definition, we have 
$\|S_{t,j}(\hat\theta_{t,j})\|_{(V_j^{(t)})^{-1}}^2  = \big\|\sum_{\tau \in \mathcal{T}_t^{(j)} } Z_{\tau,j}\epsilon_{\tau,j}\big\|_{(V_j^{(t)})^{-1}}^2$. Thus,
combining~\eqref{eq:bd_barV_gen-nchoosek} and~\eqref{eq:bd_barV_hatV_gen-nchoosek},
we have 
\#\label{eq:bd_hatV_gen-nchoosek}
\|S_{t,j}(\hat\theta_{t,j})\|_{(V_j^{(t)})^{-1}}^2 
&\leq 3 \bar\sigma^2 \big( (d+1) \log(1+2|\mathcal{T}_t^{(j)}|/d) + \log(1/\delta)\big) \notag \\ 
&\leq 3 \bar\sigma^2 \big( (d+1) \log(1+2t/d) + \log(1/\delta)\big)
\#
for all $t\geq T_0$ with probability at least $1-\delta$. 
Taking a union bound on the above two facts, 
we know that 
\#\label{eq:event_consistent-nchoosek}
{\lambda_{\min}(V_j^{(t)})} \geq \frac{1}{\kappa^2 }\|S_{t,j}(\theta) \|_{(V_j^{(t)})^{-1}}^2 
\#
holds
 with probability at least $1-2\delta$.
Now we apply Lemma~A of \cite{chen1999strong} again, which implies that if we set $r=1$, then $\|\hat\theta_{t,j}-\theta_j\|\leq 1$ on the 
event~\eqref{eq:event_consistent-nchoosek}, which further implies that  $\|\hat\theta_{t,j}-\theta_k\|\leq 1$ for all $t\geq T_0$ with probability at least $1-2\delta$. Applying  \eqref{eq:bd1-nchoosek}, we know that with probability at least $1-2\delta$, 
\[
\|\hat\theta_{t,j}-\theta_j\|_{V_j^{(t)}}^2 
\leq \frac{1}{\kappa^2}
\Big\|\sum_{\tau \in \mathcal{T}_t^{(j)} } Z_{\tau,j}\epsilon_{\tau,j}\Big\|_{(V_j^{(t)})^{-1}}^2 \leq \frac{3 \bar\sigma^2}{\kappa^2} \big( (d+1) \log(1+2t/d) + \log(1/\delta)\big).
\]
Therefore, we conclude the proof by replacing $\delta$ with $\delta/2$.
\end{proof}

\subsection{Proof of Lemma~\ref{lem:regret_general-nchoosek}}
\label{app:regret_lemma}
\begin{proof}[Proof of Lemma~\ref{lem:regret_general-nchoosek}]
    
Denote $\xi = \frac{\sqrt{3} \bar\sigma }{\kappa } \sqrt{ (d+1) \log(1+2T/d) + \log(2/\delta)}$. 
From Proposition~\ref{prop:glm_err-n-choose-k}, 
we know that with probability at least $1-\delta$,
\begin{equation}\label{eq:ucb_event_mu_general-nchoosek}
\hat{\mu}_{t,q_t(k)}^L(x,\sigma(k))\leq  \mu_{t,q_t(k)}(x,\sigma(k)) \leq \hat{\mu}^U_{t,q_t(k)}(x,\sigma(k))
\end{equation}
holds  for  all $x\in\cX,\sigma\in S_K,k\in[K], t\in[T_0,T]$, where 
\[
\hat{\mu}^U_{t,j}(z) := A'( z^\top \hat\theta_{t,j} + \xi \cdot \|z\|_{(V_j^{(t)})^{-1}} ),\quad 
\hat{\mu}^L_{t,j}(z) := A'( z^\top \hat\theta_{t,j} - \xi \cdot \|z\|_{(V_j^{(t)})^{-1}} ).
\]
For any $x\in \cX$ and any $\sigma\in S_K$, 
for notational convenience, we denote
\begin{align*}
&\hat{U}_{t}(x,\sigma) = H\big(g_k(\hat{\mu}^U_{t,q_t(1)}(x, \sigma(1)))+ \dots + g_k(\hat{\mu}^U_{t,q_t(K)}(x, \sigma(K)))\big), \\
&\hat{L}_{t}(x,\sigma) = H\big(g_k(\hat{\mu}^L_{t,q_t(1)}(x, \sigma(1)))+ \dots + g_k(\hat{\mu}^L_{t,q_t(K)}(x, \sigma(K)))\big).
\end{align*}
Then, since $H$ and $g_k$ are monotone, we know that $\hat{L}_t(x,\sigma)$ and $\hat{U}_t(x,\sigma)$ are valid LCBs and UCBs:
\begin{equation}\label{eq:ucb_event_r_general-nchoosek}
\PP\big(\hat{L}_{t}(x,\sigma)\leq r(x,\sigma) \leq \hat{U}_{t}(x,\sigma),~\forall x\in \cX,\sigma\in S_K,t\in [T_0,T] \big) \geq 1-\delta.
\end{equation}

By definition, the regret at $T\geq T_0$ can be upper bounded as 
\begin{align*}
R_T &\leq R_{T_0} + \sum_{t=T_0}^T \big\{\hat{U}_{t}(X_t,\sigma_t)  
- \hat{L}_{t}(X_t, \sigma_t)   \big\} \\ 
& = R_{T_0} + \sum_{t=T_0}^T  H\Big(\sum_{k=1}^K g_k\big(\hat{\mu}^U_{t,q_t(k)}(X_t, \sigma_t(k)\big)\Big) -  H\Big(\sum_{k=1}^K g_k\big(\hat{\mu}^L_{t,q_t(k)}(X_t, \sigma_t(k)\big)\Big) \\ 
&\leq R_{T_0} + \sum_{t=T_0}^T \sum_{k=1}^K c_k
\Big[A'( Z_{t,q_t(k)}^\top \hat\theta_{t,q_t(k)} + \xi \cdot \|Z_{t,q_t(k)}\|_{(V^{(t)}_{q_t(k)})^{-1}} ) \\ 
&\qquad \qquad \qquad \qquad \qquad \qquad -A'( Z_{t,q_t(k)}^\top \hat\theta_{t,q_t(k)} - \xi \cdot \|Z_{t,q_t(k)}\|_{(V^{(t)}_{q_t(k)})^{-1}} ) \Big] \\
&\leq R_{T_0} + M_1\cdot 2\xi \cdot \sum_{t=T_0}^T \sum_{k=1}^K c_k  \cdot \|Z_{t,q_t(k)}\|_{(V^{(t)}_{q_t(k)})^{-1}} ,
\end{align*}
where the last inequality follows from the fact that the first derivative of $A'$ is upper bounded by $M_1$.  We thus conclude the proof of Lemma~\ref{lem:regret_general-nchoosek}.
\end{proof}

\section{Technical Proofs for $n=K$}

\subsection{Proof of Theorem~\ref{thm:n=k}}

\begin{proof}[Proof of Theorem~\ref{thm:n=k}]
When $n=K$, without loss of generality 
we have $s(X)\equiv \{1,2,\dots,K\}$ 
for a fixed ordering of $K$ items. That is, $q_t(k)=k$ 
for any $t\in[T]$ and any $k\in[K]$. 
For any $z=(i,x)$, $i\in [K]$ and $x\in\cX$, and any 
$t\in[T]$, $k\in[K]$, recall that 
\$
\muhatupp{t}{k}(z) := A'( z^\top \hat\theta_{t,k} + \xi \cdot \|z\|_{(\covmat{t}{k})^{-1}} ).
\$
We also define the lower confidence bound as
\$
\muhatlow{t}{k}(z) := A'( z^\top \hat\theta_{t,k} - \xi \cdot \|z\|_{(\covmat{t}{k})^{-1}} ).
\$
For any $x\in \cX$ and any $\sigma\in S_K$, 
for notational convenience, we denote
\$
&\rupp{t}(x,\sigma) = H\big(\muhatupp{t}{1}(x, \sigma(1)), \dots, \muhatupp{t}{K}(x, \sigma(K))\big), \\
&\rlow{t}(x,\sigma) = H\big(\muhatlow{t}{1}(x, \sigma(1)), \dots, \muhatlow{t}{K}(x, \sigma(K))\big).
\$ 
We will prove that with probability at least $1-\delta$,
\#\label{eq:ucb_event_mu}
\muhatlow{t}{k}(x,\sigma(k))\leq  \mutrue{t}{k}(x,\sigma(k)) \leq \muhatupp{t}{k}(x,\sigma(k)),~\text{for all }x\in\cX,\sigma\in S_K,k\in[K], t\in[T],  
\# 
which further implies the UCB conditions on the true rewards $\rtrue(x,\sigma)$:
\#\label{eq:ucb_event_r}
\PP\Big( \rlow{t}(x,\sigma)\leq \rtrue(x,\sigma) \leq \rupp{t}(x,\sigma) \Big) \geq 1-\delta.
\#

Recall that 
\$  
    \{\hat{\alpha}_{t,k}, \hat{\beta}_{t,k}\} =\argmax_{\alpha, \beta}\sum_{\tau=1}^t Y_{\tau,k}\big(\alpha \sigma_{\tau}(k) + \beta^T X_t\big) - A\big(\alpha\sigma_{\tau}(k) + \beta^T X_t\big).
\$
is the MLE of $\alpha_k,\beta_k$ using data up to time $t$. 
We are to prove the consistency of $\hat\alpha_{t,k}$ and 
$\hat\beta_{t,k}$ and leverage it to construct valid 
UCBs. 
To simplify notations, 
for any $t\in[T]$, we denote the augmented feature 
and parameters  as 
\$
Z_{t,k} = (\sigma_t(k), X_t),\quad \theta_k = (\alpha_k,\beta_k), \quad \text{and} \quad \hat\theta_{t,k} = (\hat\alpha_{t,k}, \hat\beta_{t,k}).
\$
Then, our point estimates can be written as 
\$
\hat\theta_{t,k} =\argmax_{\theta}\sum_{\tau=1}^t Y_{\tau,k}\theta^\top Z_{\tau,k} - A\big(\theta^\top Z_{\tau,k}\big).
\$
We also define the empirical covariance matrices as 
$\covmat{t}{k} := \sum_{\tau=1}^{t-1} Z_{k,t}Z_{k,t}^\top$.

\begin{lemma}[Eigenvalue of $\covmat{t}{k}$]\label{lem:eigen_Vk}
Fix any $\delta\in(0,1)$, and let 
$c_1   := \min\{\frac{1}{12}+\frac{1}{6K^2},c_x\}>0$. 
Let $B>0$ be any positive constant. Fix any $k\in[K]$. Suppose
\$
T_0 \geq \max\bigg\{ \bigg(\frac{32}{3c_1}+\frac{256}{c_1^2} \bigg) \log\bigg(\frac{2d+2}{\delta}\bigg)  , ~\frac{2 B}{c_1} \bigg\},  
\$
Then with probability at least $1-\delta$, 
it holds that $\lambda_{\min}(V_k^{(t)})\geq B$ for all $t\geq T_0$.
\end{lemma}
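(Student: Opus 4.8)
The plan is to follow the same three-move strategy used in the proof of Lemma~\ref{lemma:eigenvaluebound-nchoosek}, simplified by two features of the $n=K$ regime: here $k$ is fixed, so no union bound over items is needed, and every item appears in the list at every round, so the indicator $\mathbbm{1}\{k\in s(X_\tau)\}$ is identically one (equivalently, the factor $K/N$ becomes $1$). First I would reduce the ``for all $t\ge T_0$'' claim to a single statement about $\covmat{T_0}{k}$: since $\covmat{t}{k}=\covmat{T_0}{k}+\sum_{i=T_0}^{t-1}Z_{i,k}Z_{i,k}^\top \succeq \covmat{T_0}{k}$, monotonicity of $\lambda_{\min}$ gives $\lambda_{\min}(\covmat{t}{k})\ge\lambda_{\min}(\covmat{T_0}{k})$, so it suffices to prove $\lambda_{\min}(\covmat{T_0}{k})\ge B$.

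Next I would pin down the population target. During the randomization phase the rankings are drawn independently and uniformly, so $\covmat{T_0}{k}=\sum_{\tau=1}^{T_0-1}Z_{\tau,k}Z_{\tau,k}^\top$ is a sum of i.i.d.\ rank-one matrices with common mean $\Sigma_k:=\mathbb{E}[Z_{\tau,k}Z_{\tau,k}^\top]$, where $Z_{\tau,k}=(\sigma_\tau(k),X_\tau)$. The key new computation relative to the $n$-choose-$K$ case is the \emph{exact} second moment of the rescaled position: since $\sigma_\tau(k)$ is uniform on $\{\tfrac1K-\tfrac12,\dots,\tfrac12\}$, evaluating $\tfrac1K\sum_{i=1}^K(\tfrac iK-\tfrac12)^2$ yields $\mathbb{E}[\sigma_\tau(k)^2]=\tfrac1{12}+\tfrac1{6K^2}$. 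Using the independence of $\sigma_\tau(k)$ and $X_\tau$ to organize $\Sigma_k$ into its position/context blocks, together with Assumption~\ref{assump:context} (which in the $n=K$ regime reads $\mathbb{E}[X_\tau X_\tau^\top]\succeq c_x I$), I would conclude $\lambda_{\min}(\Sigma_k)\ge c_1=\min\{\tfrac1{12}+\tfrac1{6K^2},c_x\}$.

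With the population floor in hand, the third move is a matrix Bernstein argument applied to the centered i.i.d.\ summands $Z_{\tau,k}Z_{\tau,k}^\top-\Sigma_k$. These are mean-zero and bounded in operator norm by $\|Z_{\tau,k}Z_{\tau,k}^\top\|_{\mathrm{op}}+\|\Sigma_k\|_{\mathrm{op}}\le 2$ (using $\|Z_{\tau,k}\|\le1$, hence $\|\Sigma_k\|_{\mathrm{op}}\le\mathbb{E}\|Z_{\tau,k}\|^2\le1$ by Jensen), and their matrix variance is $O(T_0)$. Bernstein then gives, with probability at least $1-\delta$, a deviation bound $\|\covmat{T_0}{k}-T_0\Sigma_k\|_{\mathrm{op}}\le\sqrt{c'T_0\log(\tfrac{2(d+1)}{\delta})}+c''\log(\tfrac{2(d+1)}{\delta})$, and Weyl's inequality yields $\lambda_{\min}(\covmat{T_0}{k})\ge T_0 c_1-\|\covmat{T_0}{k}-T_0\Sigma_k\|_{\mathrm{op}}$. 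Forcing the square-root term to be at most $\tfrac14 T_0 c_1$ requires $T_0\gtrsim \log(\cdot)/c_1^2$ (the $c_1^{-2}$ arising from solving a quadratic in $\sqrt{T_0}$, which is precisely the $\tfrac{256}{c_1^2}$ term), while forcing the linear term below $\tfrac14 T_0 c_1$ requires $T_0\gtrsim\log(\cdot)/c_1$ (the $\tfrac{32}{3c_1}$ term); the remaining slack leaves $\lambda_{\min}(\covmat{T_0}{k})\ge\tfrac12 T_0 c_1\ge B$ once $T_0\ge 2B/c_1$.

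The step I expect to be the main obstacle is the population eigenvalue bound $\lambda_{\min}(\Sigma_k)\ge c_1$. The subtlety is that the rescaling leaves the position coordinate with nonzero mean $\mathbb{E}[\sigma_\tau(k)]=\tfrac1{2K}$, so $\Sigma_k$ is \emph{not} exactly block-diagonal: its off-diagonal block equals $\tfrac1{2K}\,\mathbb{E}[X_\tau]$, which need not vanish. I would resolve this either by invoking $\mathbb{E}[X_\tau]=0$ (under which the cross term drops and $\lambda_{\min}(\Sigma_k)=\min\{\mathbb{E}[\sigma_\tau(k)^2],\lambda_{\min}(\mathbb{E}[X_\tau X_\tau^\top])\}$ is immediate), or, in general, by writing $u^\top\Sigma_k u=\mathbb{E}[(u_0\sigma_\tau(k)+u_x^\top X_\tau)^2]$ and lower-bounding this quadratic form directly so that the cross term is controlled rather than discarded. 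Everything else is a routine adaptation of Lemma~\ref{lemma:eigenvaluebound-nchoosek} with $K/N$ set to one.
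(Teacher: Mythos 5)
Your proposal is correct and follows essentially the same route as the paper's proof: reduction to bounding $\lambda_{\min}(V_k^{(T_0)})$ by monotonicity, the exact second moment $\frac{1}{12}+\frac{1}{6K^2}$ of the rescaled position, the population floor $\lambda_{\min}(\Sigma)\geq c_1$, matrix Bernstein on the centered summands (operator norm bounded by $2$, matrix variance of order $T_0$), and the same bookkeeping that produces the $\frac{256}{c_1^2}$ and $\frac{32}{3c_1}$ terms together with the final condition $T_0\geq 2B/c_1$. The off-diagonal cross-term subtlety you flag is resolved in the paper by exactly your first suggested fix: its proof explicitly takes $\mathbb{E}[x_1]=0$, so $\Sigma$ is block-diagonal and the eigenvalue bound is immediate.
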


\begin{prop}[Estimation error of $\hat\theta_{t,k}$]
\label{prop:glm_err_theta}
Fix any $\delta\in(0,1/4)$, and suppose 
\#\label{eq:T0_bd}
T_0 \geq \max\bigg\{ \bigg(\frac{32}{3c_1}+\frac{256}{c_1^2} \bigg) \log\bigg(\frac{4d+4}{\delta}\bigg) ,~\frac{6\bar\sigma^2}{c_1\cdot\kappa^2}\big((d+1) \log(1+2T/d) + \log(2/\delta)\big) \bigg\}.
\# 
Then, with probability at least 
$1-\delta$, it holds simultaneously for all $T_0\leq t\leq T$ and  $k\in[K]$ that 
\#\label{eq:glm_err}
\big\|  \hat\theta_{t,k}-\theta_k \big\|_{\covmat{t}{k}} \leq  \frac{\sqrt{3} \bar\sigma }{\kappa } \sqrt{ (d+1) \log(1+2t/d) + \log(2/\delta)} .
\# 
\end{prop}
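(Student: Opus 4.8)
The plan is to establish Proposition~\ref{prop:glm_err_theta} by specializing the argument already used for Proposition~\ref{prop:glm_err-n-choose-k}, exploiting the structural simplification afforded by $N=K$. Since every item appears in the recommended list at every round, we have $q_t(k)=k$ and $\mathcal{T}_t^{(k)}=[t-1]$ for all $k\in[K]$ and $t$, so each per-item MLE in~\eqref{eqn:thetaest} uses the full history and no reindexing of appearance times is needed (this removes the coarse-martingale step from the general proof). Fixing $k\in[K]$, I would condition on the contexts $\{X_i\}_{i=1}^T$, define the filtration generated by the decisions and rewards through round $\tau$, and work with $L_{t,k}(\theta)=\sum_{\tau=1}^{t-1} Y_{\tau,k}\theta^\top Z_{\tau,k}-A(\theta^\top Z_{\tau,k})$, whose Hessian is $\nabla^2 L_{t,k}(\theta)=\sum_{\tau=1}^{t-1}A''(\theta^\top Z_{\tau,k})Z_{\tau,k}Z_{\tau,k}^\top$ and whose covariance surrogate is $\covmat{t}{k}=\sum_{\tau=1}^{t-1}Z_{\tau,k}Z_{\tau,k}^\top$.

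The deterministic core of the argument proceeds exactly as before. From the first-order condition $\nabla L_{t,k}(\hat\theta_{t,k})=0$ and a mean value theorem, one writes $\sum_{\tau=1}^{t-1}Z_{\tau,k}\epsilon_{\tau,k}=\nabla^2 L_{t,k}(\tilde\theta)(\theta_k-\hat\theta_{t,k})$ where $\epsilon_{\tau,k}=Y_{\tau,k}-A'(\theta_k^\top Z_{\tau,k})$ is conditionally mean-zero. Invoking the curvature lower bound $\kappa:=\inf A''>0$ from Assumption~\ref{assump:regularity}(c), I would derive $\big\|\sum_{\tau}Z_{\tau,k}\epsilon_{\tau,k}\big\|_{(\covmat{t}{k})^{-1}}^2\geq \kappa^2\|\hat\theta_{t,k}-\theta_k\|_{\covmat{t}{k}}^2$, valid on the region $\|\hat\theta_{t,k}-\theta_k\|\leq 1$. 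To certify that we are in this region, I would introduce $S_{t,k}(\theta)=\nabla L_{t,k}(\theta_k)-\nabla L_{t,k}(\theta)$, verify its strict convexity and injectivity (again using $A''>0$ and $\covmat{t}{k}\succ 0$), and apply \citet[Lemma~A]{chen1999strong} to obtain the inclusion $\{\theta:\|S_{t,k}(\theta)\|_{(\covmat{t}{k})^{-1}}^2\leq \kappa^2 r^2\lambda_{\min}(\covmat{t}{k})\}\subseteq\{\theta:\|\theta-\theta_k\|\leq r\}$, which converts gradient-norm control into parameter-norm control.

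The stochastic ingredient is the self-normalized martingale bound: applying Lemma~\ref{lem:self_normalize_t} with $\bar V_{t,k}=\lambda I+\covmat{t}{k}$ and $\lambda=1/2$, together with $\|Z_{\tau,k}\|\leq 1$ and the determinant bound, gives $\|\sum_\tau Z_{\tau,k}\epsilon_{\tau,k}\|_{(\covmat{t}{k})^{-1}}^2\leq 3\bar\sigma^2\big((d+1)\log(1+2t/d)+\log(1/\delta)\big)$ uniformly in $t$ on the eigenvalue event $\lambda_{\min}(\covmat{t}{k})\geq 1$. That event, with the required margin, is supplied by Lemma~\ref{lem:eigen_Vk} using the $N=K$ constant $c_1=\min\{\tfrac{1}{12}+\tfrac{1}{6K^2},c_x\}$ (the position second moment $\tfrac{1}{12}+\tfrac{1}{6K^2}$ replacing the $\tfrac{1}{2K}$ of the general case). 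The lower bound on $T_0$ in~\eqref{eq:T0_bd} is calibrated so that $\lambda_{\min}(\covmat{t}{k})\geq \tfrac{3\bar\sigma^2}{\kappa^2}\big((d+1)\log(1+2t/d)+\log(1/\delta)\big)$; feeding this into the Chen inclusion with $r=1$ yields $\|\hat\theta_{t,k}-\theta_k\|\leq 1$, after which the curvature inequality closes the loop and delivers~\eqref{eq:glm_err}. I would finish with a union bound over the eigenvalue event, the self-normalized event, and over $k\in[K]$, then rescale the failure budget (replacing $\delta$ by $\delta/2$ as in the general proof).

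The main obstacle is the bootstrapping between consistency and the curvature bound: the inequality $\|\sum_\tau Z_{\tau,k}\epsilon_{\tau,k}\|^2\geq \kappa^2\|\hat\theta_{t,k}-\theta_k\|^2$ is only available \emph{after} one knows $\|\hat\theta_{t,k}-\theta_k\|\leq 1$, yet establishing that smallness itself requires the self-normalized deviation to be dominated by $\kappa^2\lambda_{\min}(\covmat{t}{k})$ \emph{uniformly over all} $t\in[T_0,T]$ from a single threshold $T_0$. Coordinating the constants so that this domination holds simultaneously for every $t$ and every $k$, while keeping the total failure probability at $\delta$ through the union bounds, is the delicate bookkeeping; the remaining steps are direct specializations of the $n$-choose-$k$ analysis.
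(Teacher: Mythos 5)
Your proposal follows essentially the same route as the paper's own proof: the MLE first-order condition plus mean value theorem, the curvature bound via $\kappa$, the bootstrapping through \citet[Lemma~A]{chen1999strong} to certify $\|\hat\theta_{t,k}-\theta_k\|\leq 1$, the self-normalized concentration with $\lambda=1/2$ and the determinant--trace bound, the eigenvalue guarantee from Lemma~\ref{lem:eigen_Vk} calibrated by the $T_0$ condition, and the final union bound with the $\delta\to\delta/2$ rescaling. The simplifications you identify for $N=K$ (full history, $q_t(k)=k$, no coarse-martingale reindexing) are exactly how the paper's proof differs from the general $N$-choose-$K$ argument, so the proposal is correct and matches the paper.
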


Now let $\xi = \frac{\sqrt{3} \sigma }{\kappa } \sqrt{ (d+1) \log(1+2T/d) + \log(2/\delta)}$. By 
Proposition~\ref{prop:glm_err_theta}, we know that 
if $T_0$ is sufficiently great that it obeys~\eqref{eq:T0_bd}, 
then by Cauchy-Schwarz inequality, it holds that 
\$
z^\top (\hat\theta_{t,k} - \theta_k)
\leq \|z\|_{(\covmat{t}{k})^{-1}} \cdot \|\hat\theta_{t,k} - \theta_k\|_{\covmat{t}{k}} \leq \xi \cdot \|\hat\theta_{t,k} - \theta_k\|_{\covmat{t}{k}},\quad \forall \|z\|\leq 1, ~ t\geq T_0
\$
with probability at least $1- \delta$. 
Thus, 
the event~\eqref{eq:ucb_event_mu} holds with probability at least $1- \delta$, 
hence~\eqref{eq:ucb_event_r} holds. 
On the event in~\eqref{eq:ucb_event_r}, the regret can be bounded as 
\$
R_T &= R_{T_0} + \sum_{t=T_0+1}^T \rtrue(X_t,\sigma_t^*) - \rtrue(X_t,\sigma_t) \\
&=  R_{T_0}+ \sum_{t=T_0}^T
\big\{ \rupp{t}(X_t,  \sigma_t^*) 
+  \rtrue(X_t,\sigma_t^*) - \rupp{t}(X_t,\sigma_t^*)
- \rupp{t}(X_t, \sigma_t) 
+ \rupp{t}(X_t, \sigma_t) 
- \rtrue(X_t, \sigma_t) \big\} \notag \\ 
&\leq R_{T_0}+ \sum_{t=T_0}^T
\big\{   \rtrue(X_t,\sigma_t^*) -\rupp{t}(X_t,\sigma_t^*)
+ \rupp{t}(X_t, \sigma_t)  
- \rtrue(X_t,  \sigma_t) \big\} \notag \\ 
&\leq R_{T_0}+ \sum_{t=T_0}^T
\big\{   \rupp{t}(X_t, \sigma_t)  
- \rtrue(X_t, \sigma_t) \big\}  
 \leq R_{T_0} + \sum_{t=T_0}^T \big\{   \rupp{t}(X_t,\sigma_t)  
- \rlow{t}(X_t, \sigma_t)   \big\},
\$
where the second inequality 
uses the fact that 
$\rupp{t}(X_t,\sigma_t) \geq \rupp{t}(X_t,\sigma_t^*)$, 
and the third and fourth inequalities 
follow from the event~\eqref{eq:ucb_event_r}.  

By the monotonicity and Lipschitz  conditions in Assumption~\ref{assump:regularity}(a), we know that 
\$
0 &\leq \rupp{t}(X_t, \sigma_t)  
- \rlow{t}(X_t, \sigma_t)   \\
&=  H\bigg(\sum_{k=1}^Kg_k(\muhatupp{t}{k}(X_t, \sigma_t(k)))\bigg)
- H\bigg(\sum_{k=1}^Kg_k(\muhatlow{t}{k}(X_t, \sigma_t(k)))\bigg) \\ 
&\leq \sum_{s=1}^{K} \Bigg\{ H\bigg(\sum_{k=1}^s g_k(\muhatupp{t}{k}(X_t, \sigma_t(k))) 
+ \sum_{k=s+1}^Kg_k(\muhatlow{t}{k}(X_t, \sigma_t(k))) \bigg) \\ 
&\qquad \qquad- H\bigg(\sum_{k=1}^{s-1}g_k(\muhatupp{t}{k}(X_t, \sigma_t(k))) + \sum_{k=s}^K g_k(\muhatlow{t}{k}(X_t, \sigma_t(k)))\bigg) \Bigg\} \\ 
&\leq \sum_{s=1}^K c_s \cdot \big[  \muhatupp{t}{s}(X_t, \sigma_t(s)) - \muhatlow{t}{s}(X_t, \sigma_t(s)) \big] \\ 
&=  \sum_{k=1}^K c_k
\Big[A'( Z_{t,k}^\top \hat\theta_{t,k} + \xi \cdot \|Z_{t,k}\|_{(\covmat{t}{k})^{-1}} ) -A'( Z_{t,k}^\top \hat\theta_{t,k} - \xi \cdot \|Z_{t,k}\|_{(\covmat{t}{k})^{-1}} ) \Big] \\ 
&\leq M_1\cdot \sum_{k=1}^K c_{k} \cdot 2 \xi \cdot \|Z_{t,k}\|_{(\covmat{t}{k})^{-1}} ,
\$ 
where $Z_{t,k} = (X_t,\sigma_t(k))$ is the aggregated feature for item $k$ at time $t$. 
We thus have 
\$
R_T \leq 
R_{T_0} +2\xi\cdot M_1\cdot \sum_{k=1}^K c_{k}  \sum_{t=T_0}^T  \|Z_{t,k}\|_{(\covmat{t}{k})^{-1}}.
\$

We then use the self-normalized concentration inequality (c.f.~Lemma~\ref{lem:self_normal}) to bound the RHS above. 
Under the notations of Lemma~\ref{lem:self_normal}, 
fixing any $s\in [K]$, 
we set $X_t = Z_{t,s}$, 
$V= V_k^{(T_0)}$, and note that 
$\bar{V}_{t-T_0+1} := V_k^{(t)}=V + \sum_{i=T_0}^{t} X_tX_t^\top$. 
Also, on the event in Proposition~\ref{prop:glm_err_theta}
we see $\lambda_{\min}(V_s^{(T_0)})\geq 1$ 
while $\|Z_{t,s}\|\leq 1$. 
Therefore, invoking Lemma~\ref{lem:self_normal}, we have 
\$
\sum_{t=T_0}^t \| Z_{t,s} \|_{(V_k^{(t)})^{-1}}^2
&\leq 2 \log \bigg(\frac{\det (V_k^{(t)})}{\det (V_k^{(T_0)})}\bigg) \\
&\leq 2d \log\bigg( \frac{\tr(V) +t-T_0}{d} \bigg) - 2\log \det V_k^{(T_0)} ,
\$
where $\tr(V)\leq \sum_{i=1}^{T_0}\tr(Z_{i,s}Z_{i,s}^\top )\leq T_0$, and $\det (V_s^{(T_0+1)})\geq 1$ since 
$\lambda_{\min}(V_{s}^{(T_0+1)})\geq   \lambda_{\min}(V_s^{(T_0)})\geq 1$. Therefore, by the Cauchy-Schwarz inequality, we have 
\$
\sum_{t=T_0}^t \|Z_{t,s}\|_{(V_k^{(t)})^{-1}} 
\leq \bigg( (t-T_0) \sum_{t=T_0}^t \| Z_{t,s} \|_{(V_k^{(t)})^{-1}} ^2 \bigg)^{1/2}
\leq \sqrt{t-T_0} \cdot \sqrt{2d\log(T/d)} 
\$
simultaneously for all $s\in[K]$
on the events in Proposition~\ref{prop:glm_err_theta}. 
This further implies 
\$
R_T &\leq R_{T_0 } + 2\xi\cdot M_1 \cdot c_H \sqrt{T-T_0} \cdot \sqrt{2d\log(T/d)} \\
&\leq R_{T_0} +\frac{2\sqrt{3} \sigma }{\kappa } \cdot M_1\cdot c_H \sqrt{ T(d+1) \log(1+2T/d) + T \log(2/\delta)} \cdot  \sqrt{ 2d \log(T/d)} \\ 
&\leq R_0 T_0 + \frac{5\sigma}{\kappa}\cdot M_1\cdot c_H \cdot d\sqrt{ T } \log(T/(d\delta))
\$
with probability at least $1-\delta$, where we denote $c_H :=\sum_{k=1}^K c_{k}$.
\end{proof}

\subsection{Proof of Lemma~\ref{lem:eigen_Vk}}

\begin{proof}[Proof of Lemma~\ref{lem:eigen_Vk}]
By definition, for any $t\geq T_0$ we have 
$V_k^{(t)}= V_k^{(T_0)} +\sum_{i=T_0}^{t-1} \Zki(\Zki)^\top 
\succeq V_k^{(T_0)}$, hence $\lambda_{\min}(V_k^{(t)})\geq 
\lambda_{\min}(V_k^{(T_0)})$. It thus suffices to bound 
$\lambda_{\min}(V_k^{(T_0)})$ simultaneously for all $k\in [K]$. 
Due to random sampling, 
$
V_k^{(T_0)} = \sum_{i=1}^{T_0-1} \Zki {\Zki}^\top 
$
are summations of i.i.d.~matrices, where each item has covariance matrix 
$
\Sigma := \EE\big[ \Zkone (\Zkone)^\top   \big] .
$
Here $\Zkone= (\sigma_1(k),x_1)$, where after our rescaling 
$\sigma_1(k)\sim \textrm{Unif}(-1/2+1/K,-1/2+2/K,\dots, 1/2)$, 
and is independent of $x_1$ obeying $\EE[x_1]=0$ 
and $\EE[x_1x_1^\top]=\Sigma_x$. We then have 
\$
\Sigma = \begin{pmatrix}
\frac{1}{12}+\frac{1}{6K^2} & 0 \\ 
0 & \Sigma_x
\end{pmatrix},
\$
hence $\lambda_{\min}(\Sigma)=\min\{\frac{1}{12}+\frac{1}{6K^2},\lambda_{\min}(\Sigma_x)\} \geq c_1:= \min\{\frac{1}{12}+\frac{1}{6K^2},c_x\}>0$. 
Also, by Jensen's inequality we have 
$\|\Sigma\|_{\textrm{op}}=\|\EE[\Zkone(\Zkone)^\top]\|_{\textrm{op}} \leq \EE[\|\Zkone\|^2] \leq 1$.
Furthermore, $\{\Zki(\Zki)^\top -\Sigma\}_{i=1}^{T_0-1}$ 
are i.i.d.~centered random matrices in $\RR^{(d+1)\times (d+1)}$ 
with uniformly bounded  matrix operator norm, i.e., by the triangle inequality, 
\$
\big\|\Zki(\Zki)^\top -\Sigma\big\|_{\textrm{op}}\leq 
\big\|\Zki(\Zki)^\top\big\|_{\textrm{op}} + \|\Sigma \|_{\textrm{op}} \leq 2.
\$   
Let $A:=V_k^{(T_0)} - T_0\Sigma
= \sum_{i=1}^{T_0-1}\{\Zki(\Zki)^\top -\Sigma\} $, 
then by the triangle inequality, 
\$
\big\|\EE[AA^\top]\big\|_{\textrm{op}}
&= \Big\|
\sum_{i=1}^{T_0-1}  \EE \big[\{\Zki(\Zki)^\top -\Sigma\}
 \{\Zki(\Zki)^\top -\Sigma\}^\top \big]\Big\|_{\textrm{op}} \\
&\leq T_0 \cdot \Big\| \EE \big[\{\Zki(\Zki)^\top -\Sigma\}
 \{\Zki(\Zki)^\top -\Sigma\}^\top \big]\Big\|_{\textrm{op}} \\ 
&\leq T_0 \cdot  \EE  \big\| \{\Zki(\Zki)^\top -\Sigma\}
 \{\Zki(\Zki)^\top -\Sigma\}^\top \big\|_{\textrm{op}}\\
&\leq T_0 \cdot  \EE  \big\| \Zki(\Zki)^\top -\Sigma\big\|_{\textrm{op}}
\big\|\Zki(\Zki)^\top -\Sigma\}^\top \big\|_{\textrm{op}} \leq 4T_0.
\$
Similar computation yields $\big\|\EE[A^\top A]\big\|_{\textrm{op}} \leq 4T_0$. 
Then, invoking the matrix Bernstein's inequality (c.f.~Lemma~\ref{lem:matrix_concentration}), 
for all $t\geq 0$, we have 
\$
\PP\big( \|A\|_{\oper} \geq t\big) \leq 2(d+1)\cdot \exp\Big( -\frac{t^2/2}{4T_0 + 2t/3} \Big).
\$
The right-handed side is no greater than $\delta$ 
for any $t$ such that  $t\geq 4\sqrt{T_0\log(2(d+1)/\delta)}$ 
and $t\geq  8/3\cdot\log(2(d+1)/\delta ) $. 
Therefore, with probability at least $1-\delta$, we have 
\$
\big\|V_k^{(T_0)} - T_0\Sigma\big\|_{\oper} \leq 
4\sqrt{T_0\log(2(d+1)/\delta)} +  8/3\cdot\log(2(d+1)/\delta ),
\$
and hence 
\$
\lambda_{\min} (V_k^{(T_0)}) \geq T_0 \cdot \lambda_{\min}(\Sigma) 
- 4\sqrt{T_0\log(2(d+1)/\delta)} -  8/3\cdot\log(2(d+1)/\delta ).
\$
The above implies 
\$
\lambda_{\min}(V_k^{(T_0)})\geq   T_0/2\cdot \lambda_{\min}(\Sigma)
\$ 
as long as 
$4\sqrt{T_0\log(2(d+1)/\delta)}\leq T_0/4\cdot \lambda_{\min}(\Sigma)$ and $8/3\cdot\log(2(d+1)/\delta )\leq  T_0/4\cdot \lambda_{\min}(\Sigma)$. 
Therefore, supposing 
$T_0 \geq (\frac{32}{3c_1}+\frac{256}{c_1^2})\log(\frac{2d+2}{\delta}) $, 
we have 
\$\lambda_{\min} (V_k^{(T_0)}) \geq T_0 c_1/2\$ with 
probability at least $1-\delta$. 
Further letting $T_0\geq 2B/c_1$, e.g., by letting 
$T_0\geq \max\{(\frac{32}{3c_1}+\frac{256}{c_1^2})\log(\frac{2d+2}{\delta}) 
 , 2B/c_1\}$, we have $\lambda_{\min} (V_k^{(T_0)})\geq B$ 
 with probability at least $1-\delta$. 
We thus conclude the proof of Lemma~\ref{lem:eigen_Vk}.
\end{proof}

\subsection{Proof of Proposition~\ref{prop:glm_err_theta}}

\begin{proof}[Proof of Proposition~\ref{prop:glm_err_theta}]
For any $t\in [T]$ and $k\in[K]$, we define
\$
L_{t,k}(\theta) =  
\sum_{\tau=1}^{t-1} Y_{\tau,k} \theta^\top Z_{\tau,k} - A(\theta^\top Z_{\tau,k}) ,\quad 
\nabla L_{t,k}(\theta) = \sum_{\tau=1}^{t-1} Y_{\tau,k}  Z_{\tau,k} - A'(\theta^\top Z_{\tau,k})Z_{\tau,k} 
\$
By the first-order condition of the MLE for $\hat\theta_{t,k}$, we have $\nabla L_{t,k}(\hat\theta_{t,k})=0$, and 
\$
\nabla L_{t,k}(\theta_k) = \sum_{\tau=1}^{t-1} Z_{\tau,k}\big(Y_{\tau,k}   - A'(\theta_k^\top Z_{\tau,k})\big):= \sum_{\tau=1}^{t-1} Z_{\tau,k}\epsilon_{\tau,k},
\$
where $\epsilon_{\tau,k}:=Y_{\tau,k}   - A'(\theta_k^\top Z_{\tau,k})$ obeys 
$\EE[\epsilon_{\tau,k}\given \cH_{\tau,k}] = 0$, where we define $\cH_{\tau,k}=\sigma(\{Z_{1,k},Y_{1,k},\dots,Z_{\tau-1,k},Y_{\tau-1,k},Z_{\tau,k}\})$ as the history up to time $\tau$. 
By the mean value theorem, 
\$
\sum_{\tau=1}^{t-1} Z_{\tau,k}\epsilon_{\tau,k}
= \nabla L_{t,k}(\theta_k) - \nabla L_{t,k}(\hat\theta_{t,k})
= \nabla^2 L_{t,k}(\tilde\theta_{t,k})(\theta_k - \hat\theta_{t,k})
\$
for some $\tilde\theta_{t,k}$ 
that lies on the segment between $\theta_k$ and $\hat\theta_{t,k}$, 
where $\nabla^2 L_{\tau,k}(\theta)$ 
is the Hessian matrix of $L_{\tau,k}$ at $\theta$. That is, 
\$
\sum_{\tau=1}^{t-1} Z_{\tau,k}\epsilon_{\tau,k}
= \sum_{\tau=1}^{t-1} A''(\tilde\theta_{t,k}^\top Z_{\tau,k})Z_{\tau,k}Z_{\tau,k}^\top (\theta_k - \hat\theta_{t,k}).
\$
Recalling  (ii) $\kappa := \inf_{\|z\|\leq 1, \|\theta-\theta_k\|\leq 1}  {A''}(\theta^\top z) >0$ 
in Assumption~\ref{assump:regularity}(c), 
and noting that $\covmat{t}{k}=\sum_{\tau=1}^{t-1} Z_{\tau,k}Z_{\tau,k}^\top$, 
we know that 
\#\label{eq:bd1}
\Big\|\sum_{\tau=1}^{t-1} Z_{\tau,k}\epsilon_{\tau,k}\Big\|_{\covmat{t}{k}^{-1}}^2 \geq \kappa^2 \|\hat\theta_{t,k}-\theta_k\|_{\covmat{t}{k}}^2
\#
holds as long as $\|\hat\theta_{t,k}-\theta_k\|_2\leq 1$. 

In the sequel, we are to show that 
$\|\hat\theta_{t,k}-\theta_k\|_2\leq 1$ with high probability. 
Let 
\$
S_{t,k}(\theta) = \nabla L_{t,k}(\theta_k) - \nabla L_{t,k}(\theta) 
= \sum_{\tau=1}^{t-1} \big( A'(\theta_k^\top Z_{\tau,k}) - A'(\theta^\top Z_{\tau,k})\big) Z_{\tau,k}.
\$
Note that $A'$ is increasing, hence $A''(\cdot)>0$. Also, as long as $\covmat{t}{k}>0$, we know that $S_{t,k}(\theta)$ is strictly convex in $\theta\in \RR^{d+1}$, and thus $S_{t,k}$ is an injection from $\RR^{d+1}$ 
to $\RR^{d+1}$. 
Furthermore, for any $\theta$ with $\|\theta-\theta_k\|\leq 1$,  
there exists some $\tilde\theta$ that lies on the segment between $\theta$ and $\theta_k$ such that
\$
\|S_{t,k}(\theta) \|_{(\covmat{t}{k})^{-1}}^2 
= \Big\| \sum_{\tau=1}^{t-1} A''(\tilde\theta^\top Z_{\tau,k})  Z_{\tau,k}Z_{\tau,k}^\top (\theta_k -\theta)\Big\|_{(\covmat{t}{k})^{-1}}^2 
\geq \kappa^2 \lambda_{\min}(\covmat{t}{k}) \|\theta_k-\theta\|^2.
\$
The above arguments verify the conditions needed in
\citet[Lemma~A]{chen1999strong}; it then implies  
for any $r>0$, 
\#\label{eq:lemA_chen}
\Big\{\theta\colon \|S_{t,k}(\theta) \|_{(\covmat{t}{k})^{-1}}^2 \leq \kappa ^2 r^2 {\lambda_{\min}(\covmat{t}{k})} \Big\} \subseteq \{\theta\colon \|\theta-\theta_k\|\leq r\}.
\#
Note that Equation~\eqref{eq:lemA_chen} is a deterministic result.

Recall that $S_{t,k}(\hat\theta_{t,k}) = \sum_{\tau=1}^{t-1} Z_{\tau,k}\epsilon_{\tau,k}$, 
and $\epsilon_{\tau,k}$ are mean-zero conditional on $\cH_{\tau,k}$. 
Invoking the concentration inequality of self-normalized processes in Lemma~\ref{lem:self_normalize_t} 
for $\sum_{\tau=1}^{t-1} Z_{\tau,k}\epsilon_{\tau,k}$ 
and $\bar{V}_{t,k} := \lambda I + \sum_{\tau=1}^{t-1} Z_{\tau,k}Z_{\tau,k}^\top$ for some fixed $\lambda>0$, 
with probability at least $1-\delta$, 
it holds for all $t\geq 1$ that 
\$
\Big\|\sum_{\tau=1}^{t-1} Z_{\tau,k}\epsilon_{\tau,k}\Big\|_{\bar{V}_{t,k}^{-1}}^2 \leq 2 \sigma^2 \log\Bigg( \frac{\det (\bar{V}_{t,k})^{1/2} \det(\lambda I)^{-1/2}}{\delta} \Bigg).
\$
Note that $\det(\lambda I) = \lambda^{d+1}$, and $\det(\bar{V}_{t,k})\leq (\lambda_{\max}(\bar{V}_{t,k})\leq (\lambda + t/d)^{d+1}$ since $\|Z_{\tau,k}\|_2\leq 1$ by Lemma~\ref{lem:determ}. We then have 
\#\label{eq:bd_barV}
\Big\|\sum_{\tau=1}^{t-1} Z_{\tau,k}\epsilon_{\tau,k}\Big\|_{\bar{V}_{t,k}^{-1}}^2 \leq 2 \sigma^2 \big( (d+1) \log(1+t/(d\lambda)) + \log(1/\delta)\big) 
\#
with probability at least $1-\delta$ for all $t\geq 1$. Finally, since $\lambda_{\min}(\covmat{t}{k})\geq \lambda_{\min}(\covmat{T_0}{k}) \geq 1$ 
for $t\geq T_0$,  take $\lambda=1/2$ and note that 
\#\label{eq:bd_barV_hatV}
\bar{V}_{t,k} = \lambda I + \covmat{t}{k} 
\preceq (1+\lambda) \covmat{t}{k}
 = 3/2\cdot \covmat{t}{k}.
\#
Combining Equation~\eqref{eq:bd_barV} and~\eqref{eq:bd_barV_hatV},
we have 
\#\label{eq:bd_hatV}
\|S_{t,k}(\hat\theta_{t,k})\|_{(\covmat{t}{k})^{-1}}^2 = 
\Big\|\sum_{\tau=1}^{t-1} Z_{\tau,k}\epsilon_{\tau,k}\Big\|_{(\covmat{t}{k})^{-1}}^2 \leq 3 \sigma^2 \big( (d+1) \log(1+2t/d) + \log(1/\delta)\big) 
\#
for all $t\geq T_0$ with probability at least $1-\delta$. By Lemma~\ref{lem:eigen_Vk}, we know that
$\lambda_{\min}(V_k^{(t)})\geq \frac{3 \sigma^2 }{\kappa^2}\big( (d+1) \log(1+2t/d) + \log(1/\delta)\big)$ holds with probability at least $1-\delta$ since 
$
T_0 \geq \max\big\{  (\frac{32}{3c_1}+\frac{256}{c_1^2} ) \log (\frac{2d+2}{\delta} )  , ~\frac{6\sigma^2}{c_1\kappa^2} ( (d+1) \log(1+2t/d) + \log(1/\delta) )  \big\}.
$ 
Taking a union bound on the above two facts, 
we know that 
\#\label{eq:event_consistent}
{\lambda_{\min}(\covmat{t}{k})} \geq \frac{1}{\kappa^2 }\|S_{t,k}(\theta) \|_{(\covmat{t}{k})^{-1}}^2 
\#
holds
 with probability at least $1-2\delta$.
Then, taking $r=1$ in Equation~\eqref{eq:lemA_chen}, we know that 
$\|\hat\theta_{t,k}-\theta_k\|\leq 1$ on the 
event~\eqref{eq:event_consistent}, which further implies that  $\|\hat\theta_{t,k}-\theta_k\|\leq 1$ for all $t\geq T_0$ with probability at least $1-2\delta$. Applying Equation~\eqref{eq:bd1}, we know that with probability at least $1-2\delta$, 
\$
\|\hat\theta_{t,k}-\theta_k\|_{\covmat{t}{k}}^2 
\leq \frac{1}{\kappa^2}
\Big\|\sum_{\tau=1}^{t-1} Z_{\tau,k}\epsilon_{\tau,k}\Big\|_{\covmat{t}{k}^{-1}}^2 \leq \frac{3 \sigma^2}{\kappa^2} \big( (d+1) \log(1+2t/d) + \log(1/\delta)\big).
\$
Therefore, we conclude the proof of Proposition~\ref{prop:glm_err_theta} by replacing $\delta$ with $\delta/2$.
\end{proof}

\section{Supporting Lemmas}

The following lemma characterizes the deviation of the sample mean of a random matrix. See, e.g., Theorem 1.6.2 of \cite{10.1561/2200000048} and the references therein.

\begin{lemma}[Matrix Bernstein Inequality \citep{10.1561/2200000048}] \label{lem:matrix_concentration}
Suppose that  $\{ A_k\}_{k=1}^n  $ are independent and  centered random matrices in $\RR^{d_1\times d_2}$, that is, $\EE[ A_k] = 0$ for all $k \in [N]$. 
Also, suppose that such random matrices are uniformly upper bounded in the matrix operator norm, that is, $\|A_k\|_{\oper} \leq L$ for all  $k \in [N]  $. Let $Z=\sum_{k=1}^n A_k$ and 
\$
v(Z) = \max\big\{ \|\E[ZZ^\top] \|_{\oper},\|\E[Z^\top Z]\|_{\oper} \big\}= \max\bigg\{ \Big\|\sum_{k=1}^n \E[A_k A_k^\top] \Big\|_{\oper},   \Big\|\sum_{k=1}^n \E[A_k^\top A_k]\Big\|_{\oper}\bigg\}.
\$ 
For all $t\geq 0$, we have 
\begin{equation*}
\PP \big (\|Z\|_{\oper} \geq t \big )\leq (d_1+d_2)\cdot  \exp\Bigl(-\frac{t^2/2}{v(Z)+L / 3 \cdot t }\Bigr).
\end{equation*}
%
\end{lemma}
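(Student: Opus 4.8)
The plan is to use the matrix Laplace transform method, which transcribes the classical scalar Bernstein argument to matrices while handling non-commutativity through a trace-exponential subadditivity inequality. First I would reduce the rectangular case to a symmetric one via the Hermitian dilation $\mathcal{D}(A) = \bigl(\begin{smallmatrix} 0 & A \\ A^\top & 0 \end{smallmatrix}\bigr)$, which is linear, satisfies $\lambda_{\max}(\mathcal{D}(A)) = \|A\|_{\oper}$, and obeys $\mathcal{D}(A)^2 = \mathrm{diag}(AA^\top, A^\top A)$. Writing $X_k := \mathcal{D}(A_k)$, these are independent, centered, symmetric matrices in $\mathbb{R}^{(d_1+d_2)\times (d_1+d_2)}$ with $\|X_k\|_{\oper} \le L$; by linearity $\sum_k X_k = \mathcal{D}(Z)$, so $\|Z\|_{\oper} = \lambda_{\max}(\sum_k X_k)$, and the block-diagonal structure of $\EE[X_k^2]$ gives $\bigl\|\sum_k \EE[X_k^2]\bigr\|_{\oper} = v(Z)$. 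It therefore suffices to establish a one-sided tail bound for $\lambda_{\max}$ of a sum of independent centered symmetric matrices.

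Second, I would apply the matrix Laplace transform bound: for any symmetric random matrix $Y$ and any $\theta > 0$, Markov's inequality together with $e^{\theta \lambda_{\max}(Y)} = \lambda_{\max}(e^{\theta Y}) \le \tr\, e^{\theta Y}$ yields $\PP(\lambda_{\max}(Y) \ge t) \le e^{-\theta t}\, \EE\,\tr\exp(\theta Y)$. The crux is to control the matrix cumulant generating function of the sum. Invoking Lieb's concavity theorem (the map $M \mapsto \tr\exp(C + \log M)$ is concave on the positive-definite cone) and applying Jensen's inequality one summand at a time, I would obtain the subadditivity estimate $\EE\,\tr\exp\bigl(\theta \sum_k X_k\bigr) \le \tr\exp\bigl(\sum_k \log \EE\, e^{\theta X_k}\bigr)$, which stands in for the scalar identity $\EE\, e^{\theta \sum_k x_k} = \prod_k \EE\, e^{\theta x_k}$ that fails for non-commuting matrices.

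Third, I would bound each per-summand matrix MGF. Because $X_k$ is centered with $\|X_k\|_{\oper} \le L$, the scalar bound $e^{s} \le 1 + s + \tfrac{s^2/2}{1 - |s|/3}$ valid on $[-L, L]$ transfers through the eigendecomposition of $X_k$ to the semidefinite inequality $\EE\, e^{\theta X_k} \preceq I + g(\theta)\, \EE[X_k^2]$, with $g(\theta) = \tfrac{\theta^2/2}{1 - \theta L/3}$ for $0 < \theta < 3/L$, where $\EE[X_k] = 0$ kills the linear term. Operator monotonicity of $\log$ together with $\log(I + M) \preceq M$ then give $\log \EE\, e^{\theta X_k} \preceq g(\theta)\,\EE[X_k^2]$; summing and setting $V := \sum_k \EE[X_k^2]$ yields $\sum_k \log \EE\, e^{\theta X_k} \preceq g(\theta) V$. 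Using monotonicity of $\tr\exp$ under the Loewner order and the dimensional bound $\tr\exp(g(\theta) V) \le (d_1 + d_2)\, e^{g(\theta)\lambda_{\max}(V)} = (d_1 + d_2)\, e^{g(\theta) v(Z)}$, I would combine everything into $\PP(\|Z\|_{\oper} \ge t) \le (d_1 + d_2)\, e^{-\theta t + g(\theta) v(Z)}$ and optimize the exponent over $0 < \theta < 3/L$; the standard Bernstein calculation then reproduces exactly the claimed bound $(d_1 + d_2)\exp\bigl(-\tfrac{t^2/2}{v(Z) + Lt/3}\bigr)$.

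The main obstacle is the trace-exponential subadditivity of the second step. Unlike the scalar case, matrix MGFs of independent summands do not multiply, and the replacement inequality is far from elementary: it rests on Lieb's concavity theorem, a deep result in matrix analysis, combined with an inductive conditioning argument. Every other ingredient --- the dilation reduction, the Laplace-transform tail bound, the per-summand semidefinite MGF estimate, and the final scalar Bernstein optimization --- is a careful matrix-valued rewriting of the classical proof. Since this is a well-known result, in the paper I would simply cite \cite{10.1561/2200000048}; the sketch above is the argument I would reconstruct were a self-contained proof required.
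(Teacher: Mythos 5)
Your proposal is correct and takes essentially the same route as the paper: the paper's ``proof'' is simply a deferral to \citet[Theorem 1.6.2]{10.1561/2200000048}, and your sketch (Hermitian dilation to the symmetric case, the matrix Laplace transform bound, Lieb's concavity theorem for subadditivity of the matrix CGF, the per-summand MGF estimate, and the final Bernstein optimization) is precisely the argument given in that cited reference. Your concluding choice to cite rather than reproduce the proof is exactly what the paper does.
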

\begin{proof}
See, e.g., \citet[Theorem 1.6.2]{10.1561/2200000048} for a detailed proof.
\end{proof} 

The following lemma, which is adapted from \cite{abbasi2011improved},  establishes the concentration of 
 self-normalized processes.

\begin{lemma}\label{lem:self_normal}
Let $\{X_t\}_{t=1}^\infty$ be a sequence in $\RR^d$, 
$V\in \RR^{d\times d}$ a positive definite matrix, 
and define $\bar{V}_t = V+\sum_{s=1}^t X_sX_s^\top$. 
If $\|X_t\|_2 \leq L$ for all $t$ , then 
\$
\sum_{t=1}^n \min\big\{1,\|X_t\|_{\bar{V}_{t-1}^{-1}}^2\big\} 
&\leq 2\big( \log \det(\bar{V}_n) - \log\det(V)\big) \\ 
&\leq 2\big( d\log((\tr(V) + nL^2)/d) - \log \det V\big) ,
\$
and finally, if $\lambda_{\min}(V) \geq \max\{1,L^2\}$, then 
 $\sum_{t=1}^n \|X_t\|_{\bar{V}_{t-1}^{-1}}^2 \leq 2 \log \frac{\det (\bar{V}_n)}{\det (V)}$. 
\end{lemma}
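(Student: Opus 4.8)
The plan is to reduce all three claims to a single telescoping identity for the log-determinant together with two elementary inequalities. First I would exploit the rank-one update structure $\bar{V}_t = \bar{V}_{t-1} + X_t X_t^\top$ and the matrix determinant lemma to write $\det(\bar{V}_t) = \det(\bar{V}_{t-1})\bigl(1 + \|X_t\|_{\bar{V}_{t-1}^{-1}}^2\bigr)$, which is legitimate since $\bar{V}_{t-1} \succeq V \succ 0$ is invertible. Telescoping from $t=1$ to $n$ and taking logarithms yields the key identity
\[
\log\det(\bar{V}_n) - \log\det(V) = \sum_{t=1}^n \log\bigl(1 + \|X_t\|_{\bar{V}_{t-1}^{-1}}^2\bigr).
\]

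For the first bound I would use the elementary inequality $\min\{1,x\} \leq 2\log(1+x)$, valid for all $x \geq 0$: on $[0,1]$ this follows since $2\log(1+x)-x$ vanishes at $0$ and has nonnegative derivative $(1-x)/(1+x)$ there, and for $x>1$ one has $2\log(1+x) > 2\log 2 > 1$. Applying this termwise with $x = \|X_t\|_{\bar{V}_{t-1}^{-1}}^2$ and summing against the key identity gives $\sum_{t=1}^n \min\{1,\|X_t\|_{\bar{V}_{t-1}^{-1}}^2\} \leq 2\bigl(\log\det(\bar{V}_n) - \log\det(V)\bigr)$. For the second bound I would control the log-determinant via the AM--GM inequality on eigenvalues, $\det(\bar{V}_n) \leq (\tr(\bar{V}_n)/d)^d$, and then note that $\tr(\bar{V}_n) = \tr(V) + \sum_{t=1}^n \|X_t\|^2 \leq \tr(V) + nL^2$; combining these yields $\log\det(\bar{V}_n) \leq d\log\bigl((\tr(V)+nL^2)/d\bigr)$, and chaining with the first bound gives the claim.

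For the final claim, under $\lambda_{\min}(V) \geq \max\{1,L^2\}$, I would argue that the $\min$ is redundant because each summand already lies in $[0,1]$. Indeed, $\bar{V}_{t-1} = V + \sum_{s<t} X_s X_s^\top \succeq V$, so monotonicity of matrix inversion gives $\bar{V}_{t-1}^{-1} \preceq V^{-1}$, whence
\[
\|X_t\|_{\bar{V}_{t-1}^{-1}}^2 \leq \|X_t\|_{V^{-1}}^2 \leq \frac{\|X_t\|^2}{\lambda_{\min}(V)} \leq \frac{L^2}{L^2} = 1.
\]
Consequently $\min\{1,\|X_t\|_{\bar{V}_{t-1}^{-1}}^2\} = \|X_t\|_{\bar{V}_{t-1}^{-1}}^2$ for every $t$, and the first inequality immediately delivers $\sum_{t=1}^n \|X_t\|_{\bar{V}_{t-1}^{-1}}^2 \leq 2\log\bigl(\det(\bar{V}_n)/\det(V)\bigr)$.

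I do not anticipate any genuine obstacle here, as this is a classical deterministic estimate. The only points requiring mild care are verifying the scalar inequality $\min\{1,x\} \leq 2\log(1+x)$ and ensuring the determinant identity is applied to the invertible (hence well-conditioned) matrix $\bar{V}_{t-1}$; both are routine, and the eigenvalue hypothesis in the last part is used precisely to remove the truncation.
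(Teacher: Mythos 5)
Your proof is correct and follows essentially the same route as the paper's source: the paper does not prove this lemma itself but defers to Lemma~11 of \citet{abbasi2011improved}, whose argument consists of exactly your steps --- the rank-one determinant update $\det(\bar{V}_t)=\det(\bar{V}_{t-1})\bigl(1+\|X_t\|_{\bar{V}_{t-1}^{-1}}^2\bigr)$ with telescoping, the scalar inequality $\min\{1,x\}\leq 2\log(1+x)$, the AM--GM trace bound, and the observation that $\lambda_{\min}(V)\geq\max\{1,L^2\}$ forces each summand below $1$ so the truncation is redundant. The only cosmetic slip is the line $\|X_t\|^2/\lambda_{\min}(V)\leq L^2/L^2$, which tacitly assumes $L\geq 1$; writing instead $\|X_t\|^2/\lambda_{\min}(V)\leq L^2/\max\{1,L^2\}\leq 1$ covers all cases and the conclusion stands unchanged.
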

\begin{proof}
	See \citet[Lemma~11]{abbasi2011improved} for a detailed proof. 
\end{proof}

\begin{lemma}[Concentration of Self-Normalized Processes \citep{abbasi2011improved}]
\label{lem:self_normalize_t}
    Let $\{\cF_t\}_{t=1}^\infty$ be a filtration. Let $\{\eta_t\}_{t=1}^\infty$ be a real-valued stochastic process such that $\eta_t$ is $\cF_t$-measurable and $\EE[e^{\lambda\eta_t}\given \cF_{t-1}]\leq \exp(\lambda^2 R^2/2)$ for some $R\geq 0$. 
    Let $\{X_t\}_{t=1}^\infty$ be an $\RR^d$-valued stochastic process such that $X_t$ is $\cF_{t-1}$-measurable. Assume that $V$ is a $d\times d$ positive definite matrix. For any $t\geq 0$, define $\bar{V}_t = V+\sum_{s=1}^t X_sX_s^\top$, and $S_t=\sum_{s=1}^t\eta_s X_s$. Then, for any $\delta>0$, with probability at least $1-\delta$, for all $t\geq 0$, 
    \$
\|S_t\|_{\bar{V}_t^{-1}}^2 \leq 2R^2 \log\Bigg( \frac{\det (\bar{V}_t)^{1/2} \det(V)^{-1/2}}{\delta} \Bigg).
    \$
\end{lemma}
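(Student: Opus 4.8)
The plan is to prove this self-normalized tail bound by the \emph{method of mixtures}, the standard route to inequalities of this form. The essential difficulty such bounds address is that the normalizer $\bar{V}_t = V + \sum_{s=1}^t X_s X_s^\top$ is itself random and adapted to the same filtration as the noise $\{\eta_s\}$, so one cannot simply union-bound a quadratic form over a deterministic covariance. First I would fix a deterministic direction $\lambda\in\RR^d$ and define the exponential process
\[
M_t^\lambda = \exp\!\Big( \langle \lambda, S_t\rangle - \tfrac{R^2}{2}\,\lambda^\top\big(\bar{V}_t - V\big)\lambda\Big) = \prod_{s=1}^t \exp\!\Big( \eta_s\langle\lambda,X_s\rangle - \tfrac{R^2}{2}\langle\lambda,X_s\rangle^2\Big).
\]
Because $X_s$ is $\cF_{s-1}$-measurable while $\eta_s$ is conditionally $R$-sub-Gaussian, applying the sub-Gaussian moment bound with parameter $\alpha=\langle\lambda,X_s\rangle$ gives $\EE[\exp(\eta_s\langle\lambda,X_s\rangle)\mid\cF_{s-1}]\leq \exp(\tfrac{R^2}{2}\langle\lambda,X_s\rangle^2)$, so each factor has conditional expectation at most $1$ and $M_t^\lambda$ is a nonnegative supermartingale with $M_0^\lambda=1$ and $\EE[M_t^\lambda]\leq 1$.

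Next I would average $M_t^\lambda$ over $\lambda$ drawn from the Gaussian prior $\mathcal{N}(0,(R^2 V)^{-1})$ with density $h$, setting $M_t = \int_{\RR^d} M_t^\lambda\, h(\lambda)\,d\lambda$. By Tonelli (the integrand is nonnegative), $M_t$ is again a nonnegative supermartingale with $M_0=1$. The reason the Gaussian prior is chosen is that the $\lambda$-integral is exactly computable: completing the square in the quadratic exponent $\langle\lambda,S_t\rangle - \tfrac{R^2}{2}\lambda^\top\bar{V}_t\lambda$ and evaluating the resulting Gaussian integral produces the closed form
\[
M_t = \Big(\frac{\det V}{\det\bar{V}_t}\Big)^{1/2}\exp\!\Big(\frac{1}{2R^2}\,\|S_t\|_{\bar{V}_t^{-1}}^2\Big).
\]
This is precisely the step where the self-normalizing factor $\det\bar{V}_t$ emerges, matching the right-hand side of the claimed bound.

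Finally I would convert supermartingale control into the stated uniform-in-$t$ tail bound. Since $M_t$ is nonnegative with $\EE[M_0]=1$, Ville's maximal inequality for nonnegative supermartingales yields $\PP(\sup_{t\geq 0}M_t\geq 1/\delta)\leq\delta$. On the complementary event, $M_t<1/\delta$ holds for \emph{every} $t$ at once; substituting the closed form and taking logarithms rearranges exactly to
\[
\|S_t\|_{\bar{V}_t^{-1}}^2 \leq 2R^2\log\!\Big(\frac{\det(\bar{V}_t)^{1/2}\det(V)^{-1/2}}{\delta}\Big),
\]
which is the desired conclusion. I expect the main obstacle to lie not in the algebra but in two measure-theoretic justifications: (i) verifying that the mixture $M_t$ is genuinely a supermartingale, which requires Fubini/Tonelli together with integrability of the Gaussian average so that conditional expectation passes through the $\lambda$-integral; and (ii) obtaining the \emph{uniform}-over-$t$ guarantee, for which a fixed-$t$ Markov bound is insufficient and one must invoke the maximal inequality (equivalently, an optional-stopping argument applied to the stopped process $M_{t\wedge\tau}$).
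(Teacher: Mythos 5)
Your proposal is correct, and it takes the same route as the source of this result: the paper itself does not prove this lemma at all---its ``proof'' is simply the citation to \citet[Theorem~1]{abbasi2011improved}, where the lemma is established exactly by the method of mixtures you describe (a conditionally sub-Gaussian exponential supermartingale $M_t^\lambda$, a Gaussian prior over $\lambda$ whose integral evaluates in closed form to $(\det V/\det \bar{V}_t)^{1/2}\exp\bigl(\tfrac{1}{2R^2}\|S_t\|_{\bar{V}_t^{-1}}^2\bigr)$, and a stopping-time/maximal-inequality step to get uniformity over $t$). Your algebra checks out, including the cancellation of the $R^{2d}$ factors in the determinant ratio for the prior $\mathcal{N}(0,(R^2V)^{-1})$, so your write-up faithfully reconstructs the cited proof rather than offering a genuinely different one.
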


\begin{proof}
    See \citet[Theorem 1]{abbasi2011improved}.
\end{proof}

\begin{lemma}[Determinant-Trace Inequality]\label{lem:determ}
Suppose $X_1,X_2,\dots,X_t\in \RR^d$ and for any $1\leq s\leq t$, $\|X_s\|_2\leq L$. 
Let $\bar{V}_t = \lambda I +\sum_{s=1}^t X_sX_s^\top$ for some $\lambda >0$. Then, 
$\det(\bar{V}_t)\leq (\lambda + tL^2/d)^d$.
\end{lemma}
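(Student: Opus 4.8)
The plan is to reduce the determinant bound to a statement about the eigenvalues of $\bar{V}_t$ and then invoke the arithmetic--geometric mean (AM--GM) inequality. Since each $X_sX_s^\top$ is symmetric positive semidefinite and $\lambda I$ is positive definite, the matrix $\bar{V}_t = \lambda I + \sum_{s=1}^t X_sX_s^\top$ is symmetric positive definite; I would denote its eigenvalues by $\mu_1,\dots,\mu_d$, which are all real and strictly positive (indeed each $\mu_i \geq \lambda > 0$). I would then use the two standard identities that the determinant equals the product of the eigenvalues and the trace equals their sum, namely $\det(\bar{V}_t)=\prod_{i=1}^d \mu_i$ and $\tr(\bar{V}_t)=\sum_{i=1}^d \mu_i$.

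The next step is to control the trace. By linearity of the trace and the cyclic property, $\tr(X_sX_s^\top)=\|X_s\|_2^2 \leq L^2$, so that
\[
\tr(\bar{V}_t)=\tr(\lambda I)+\sum_{s=1}^t \tr(X_sX_s^\top) \leq d\lambda + tL^2.
\]
Finally, applying the AM--GM inequality to the nonnegative numbers $\mu_1,\dots,\mu_d$ yields
\[
\det(\bar{V}_t)=\prod_{i=1}^d \mu_i \leq \Big(\tfrac{1}{d}\sum_{i=1}^d \mu_i\Big)^d = \Big(\tfrac{\tr(\bar{V}_t)}{d}\Big)^d \leq \Big(\lambda + \tfrac{tL^2}{d}\Big)^d,
\]
which is precisely the claimed bound.

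There is essentially no hard step here: the whole argument is the observation that passing from the trace to the determinant is governed by AM--GM, combined with the elementary trace computation using $\|X_s\|_2 \leq L$. The only point warranting a moment's care is verifying positive definiteness, so that the eigenvalues are positive and the product/sum identities apply, but this is immediate from $\lambda > 0$. Everything else is a routine two-line calculation, so I would not anticipate any genuine obstacle.
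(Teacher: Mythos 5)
Your proof is correct, and it is essentially the standard argument: the paper itself does not prove this lemma inline but simply cites Lemma~10 of \citet{abbasi2011improved}, whose proof is exactly your reduction to eigenvalues, the trace computation $\tr(X_sX_s^\top)=\|X_s\|_2^2\leq L^2$, and the AM--GM inequality. Nothing further is needed.
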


\begin{proof}
    See \citet[Lemma 10]{abbasi2011improved}.
\end{proof}


\end{document}